\title{Amortized Variational Inference in Simple Hierarchical Models}
\author{%
  Abhinav Agrawal \\
  College of Information and Computer Science\\
  Univeristy Of Massachusetts Amherst\\
  \texttt{aagrawal@cs.umass.edu} \\
  \And
  Justin Domke \\
  College of Information and Computer Science\\
  Univeristy Of Massachusetts Amherst\\
  \texttt{domke@cs.umass.edu} \\
}
\definecolor{mycol}{rgb}{0,0,0.65}
\DeclareMathOperator*{\argmax}{arg\,max}
\DeclareMathOperator*{\argmin}{arg\,min}
\newtheoremstyle{exampstyle}
  {2.5\topsep} 
  {\topsep} 
  {} 
  {} 
  {\bfseries} 
  {.} 
  {.5em} 
  {} 
\theoremstyle{exampstyle}\newtheorem{thm}{Theorem}
\theoremstyle{exampstyle}\newtheorem{defn}[thm]{Definition}
\theoremstyle{exampstyle}
\theoremstyle{exampstyle}\newtheorem{cor}[thm]{Corrolary}
\theoremstyle{exampstyle}
\theoremstyle{exampstyle}\newtheorem{claim}[thm]{Claim}
\newtheorem*{thm*}{Theorem}
\newtheorem*{claim*}{Claim}
\crefname{defn}{definition}{definitions}
\Crefname{defn}{Definition}{Definitions}
\global\long\def\argmin{\operatornamewithlimits{argmin}}%
\global\long\def\argmax{\operatornamewithlimits{argmax}}%
\global\long\def\R{\mathbb{R}}%
\global\long\def\E{\operatornamewithlimits{\mathbb{E}}}%
\global\long\def\N{\mathcal{N}}%
\global\long\def\pars#1{\left(#1\right)}%
\global\long\def\pp#1{(#1)}%
\global\long\def\bracs#1{\left[#1\right]}%
\global\long\def\verts#1{\left\vert #1\right\vert }%
\global\long\def\KL#1{[#1]}%
\global\long\def\KL#1#2{KL\pars{#1\middle\Vert#2}}%
\global\long\def\ELBO#1#2{\mathcal{L}\pars{#1\middle\Vert#2}}%
\newcommand{\rz}{\mathsf{z}}
\newcommand{\rtheta}{\uptheta}
\newcommand{\qj}{\smash{q_\phi^\mathrm{\mathsmaller Joint}}}
\newcommand{\qb}{\smash{q_{v,w}^{\mathrm{\mathsmaller Branch}}}}
\newcommand{\qab}{\smash{q_{v,u}^{\mathrm{\mathsmaller Amort}}}}
\newcommand{\net}{{\normalfont \texttt{net}}}
\begin{document}

\maketitle

\begin{abstract}

It is difficult to use subsampling with variational inference in hierarchical models since the number of local latent variables scales with the dataset. Thus, inference in hierarchical models remains a challenge at large scale. It is helpful to use a variational family with structure matching the posterior, but optimization is still slow due to the huge number of local distributions. Instead, this paper suggests an amortized approach where shared parameters simultaneously represent all local distributions. This approach is similarly accurate as using a given joint distribution (e.g., a full-rank Gaussian) but is feasible on datasets that are several orders of magnitude larger. It is also dramatically faster than using a structured variational distribution.
\end{abstract}

\section{Introduction}



Hierarchical Bayesian models are a general framework where parameters 
of ``groups'' are drawn from some shared distribution, 
and then observed data is drawn from a distribution specified by each 
group's parameters. After data is observed, the inference problem is 
to infer both the parameters for each group and the shared parameters. 
These models have proven useful in various domains \cite{gelman2020most} including 
hierarchical regression amd classification \cite{gelman2006data}, 
topic models 
\cite{blei2003latent,lafferty2006correlated,blei2012probabilistic}, 
polling \cite{gelman2009red,lax2012democratic}, 
epidemiology \cite{lawson2008bayesian},
ecology \cite{cressie2009accounting}, 
psychology \cite{vallerand1997toward},
matrix-factorization \cite{tipping1999probabilistic}, 
and collaborative filtering \cite{lim2007variational,salakhutdinov2008bayesian}.

A proven technique for scaling variational inference (VI) to large datasets is
 subsampling. 
 The idea is that if the target model has the
 form $p(z,y)=p(z)\prod_i p(y_i \vert z)$ then an unbiased gradient can
 be estimated while only evaluating $p(z)$ and $p(y_i \vert z)$ 
 at a few $i$ \cite{hoffman2013stochastic,kingma2013auto,pmlr-v32-rezende14,ranganath14,titsias2014doubly,hoffman2015ssvi}. 

This paper addresses hierarchical models of the form $p(\theta,z,y)=p(\theta)\prod_i p(z_i, y_i \vert \theta)$, where only $y$ is observed. There are two challenges. First, the number of local latent variables $z_i$ increases with the dataset, meaning the posterior distribution increases in dimensionality. Second, there is often a dependence between $z_i$ and $\theta$ which must be captured to get strong results \cite{hoffman2015ssvi}.

The aim of this paper is to develop a black-box variational inference scheme that can scale to large hierarchical models without losing benefits of a joint approximation. 
Our solution takes three steps. First, in the true posterior, the different latent variables $z_i$ are conditionally independent given $\theta$, which suggests using a variational family of the same form. We confirm this intuition by showing that for any joint variational family $q(\theta,z)$, one can define a corresponding "branch" family $q(\theta)\prod_i q(z_i \vert \theta)$ such that inference will be equally accurate (\cref{thm: branch q better than q for hbd}). We call inference using such a family the "branch" approach.

Second, we observe that if using the branch approach, the optimal local variational parameters can be computed only from $\theta$ and local data (\cref{eq: optimal local wi}). Thus, we propose to amortize the computation of the local variational parameters by learning a network to approximately solve that optimization. We show that when the target distribution is symmetric over latent variables, this will be as accurate as the original joint family, assuming a sufficiently capable amortization network (\cref{claim: amortization}).

Third, we note that in many real hierarchical models, there are many i.i.d. data generated from each local latent variable. This presents a challenge for learning an amortization network, since the full network should deal with different numbers of data points and naturally reflect the symmetry between the inputs (that is, without having to relearn the symmetry.) We propose an approach where a preliminary "feature" network processes each datum, after which they are combined with a pooling operation which forms the input for a standard network (\cref{sec: locally iid}). This is closely related to the "deep sets" \cite{deepsets} strategy for permutation invariance. 

We validate these methods on a synthetic model where exact inference is possible, and on a user-preference model for the MovieLens dataset with 162K users who make 25M ratings of different movies. At small scale (2.5K ratings), we show similar accuracy using a dense joint Gaussian, a branch distribution, or our amortized approach. At moderate scale (180K ratings), joint inference is intractable. Branch distributions gives a meaningful answer, and the amortized approach is comparable or better. 
At large scale (18M ratings) the amortized approach is thousands of nats better on test-likelihoods even after branch distributions were trained for almost ten times as long as the amortized approach took to converge (\cref{fig: real problem results}). 

\section{Hierarchical Branched Distributions}
\label{sec: branch distributions}

\begin{figure*}[ht]
    \begin{subfigure}[b]{0.25\textwidth}
        \centering
        \resizebox*{0.65\textwidth}{!}{
\begin{tikzpicture}[x=0.7cm,y=0.8cm, ->]

  
    \node[latent]    (theta)  {$\theta$}; %
    \node[latent, below=of theta]    (z_i)  {$z_i$}; %
    \node[obs, below=of z_i]    (y_ij)  {$y_{ij}$}; %
    \node[obs, right=of y_ij]    (x_ij)  {$x_{ij}$}; %

    \edge{theta}{z_i};
    \edge{z_i}{y_ij};
    \edge{x_ij}{y_ij};

    \plate{user_i}{(x_ij)(y_ij)}{$j \in \{1, \dots, n_i\}$}
    \plate{user_all}{(z_i)(user_i)}{$i \in \{1, \dots, N\}$}













\end{tikzpicture}
        }
    \end{subfigure}
    \hfill
    \begin{subfigure}[b]{0.75\textwidth}
        \centering
        \resizebox*{!}{0.35\textwidth}{

\begin{tikzpicture}[x=0.7cm,y=0.8cm, ->]

    \node[latent, minimum size=0.6cm]    (theta)  {$\theta$}; %

    \node[latent, below=of theta, xshift = -3.4cm, minimum size=0.6cm]    (z_1)  {$z_1$}; %
    \node[latent, below=of theta, minimum size=0.6cm]    (z_2)  {$z_2$}; %
    \node[latent, below=of theta, xshift = 3.4cm, minimum size=0.6cm]    (z_3)  {$z_3$}; %

    \node[obs, below=of z_1, xshift = -0.6cm, minimum size=0.6cm]    (y_11)  {${y_{11}}$}; %
    \node[obs, below=of z_1, xshift = 0.6cm, minimum size=0.6cm]    (y_12)  {${y_{12 }}$}; %

    \node[obs, below=of z_2, minimum size=0.6cm]    (y_21)  {${y_{21}}$}; %
    
    \node[obs, below=of z_3, xshift = -1.6cm, minimum size=0.6cm]    (y_31)  {${y_{31}}$}; %
    \node[obs, below=of z_3, minimum size=0.6cm]    (y_32)  {${y_{32}}$}; %
    \node[obs, below=of z_3, xshift = 1.6cm, minimum size=0.6cm]    (y_33)  {${y_{33}}$}; %

    \node[obs, below=of y_11, minimum size=0.6cm]    (x_11)  {${x_{11}}$}; %
    \node[obs, below=of y_12, minimum size=0.6cm]    (x_12)  {${x_{12}}$}; %
    
    \node[obs, below=of y_21, minimum size=0.6cm]    (x_21)  {${x_{21}}$}; %

    \node[obs, below=of y_31, minimum size=0.6cm]    (x_31)  {${x_{31}}$}; %
    \node[obs, below=of y_32, minimum size=0.6cm]    (x_32)  {${x_{32}}$}; %
    \node[obs, below=of y_33, minimum size=0.6cm]    (x_33)  {${x_{33}}$}; %

    \edge{theta}{z_1, z_2, z_3}

    

    

    \edge{z_1}{y_11, y_12}
    \edge{z_2}{y_21}
    \edge{z_3}{y_31, y_32, y_33}
    
    
    \edge{x_11}{y_11};
    \edge{x_12}{y_12};

    \edge{x_21}{y_21};

    \edge{x_31}{y_31};
    \edge{x_33}{y_33};
    \edge{x_32}{y_32};

\end{tikzpicture}
        }
    \end{subfigure}
    \caption{
        The graphical model for the HBDs.
        On the left, we have plate notation for the generic HBD from \cref{eq: branch distributions alternate}. 
        Note, we can 
        have an edge from $\theta$ to $y_{ij}$ (we skip it for clarity.) 
        On the right, we have an example model with $N=3$. 
    }
    \label{fig: graph model hbd}
\end{figure*}
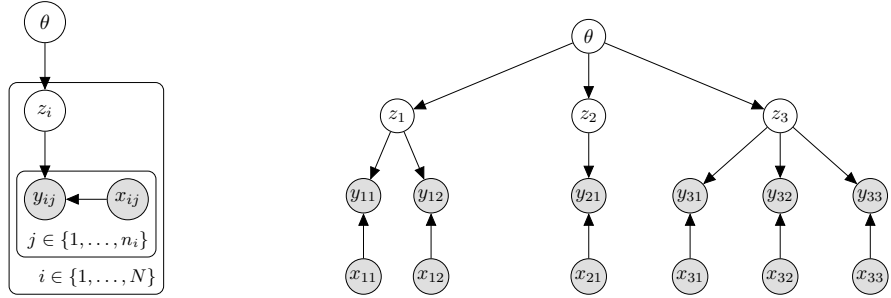
    
We focus on two-level hierarchical distributions. A generic model of this type is given by
\begin{align}
    p(\theta, z, y\vert x) & = 
            p(\theta)\prod_{i=1}^{N}p (z_i \vert \theta) p (y_{i} \vert \theta, z_i, x_{i}), \label{eq: branch distributions}
\end{align} 
where $\theta$ and $z=\{z_i\}_{i=1}^{N}$ are latent variables, 
$y=\{y_i\}_{i=1}^{N}$ are observations,
and $x=\{x_i\}_{i=1}^{N}\allowbreak$ are covariates. As the visual representations of these models 
resemble branches, we refer them as \emph{hierarchical branch distributions (HBDs)}.

\paragraph{Symmetric.} 
\phantomsection \label{sec: symmetric HBD}
We call an HBD symmetric if the conditionals are symmetric, i.e., if 
$z_i = z_j, x_i = x_j,$ and $y_i = y_j$, it implies that 
\begin{align}
    p(z_i \vert \theta) & = p(z_j \vert \theta) \text{, and }\nonumber\\
    p(y_i \vert \theta, z_i, x_i) & = p(y_j \vert \theta, z_j, x_j). \label{eq: symmetry condition}
\end{align}

\paragraph{Locally i.i.d.} Often local observations $y_i$ (and $x_i$) are a collection of conditionally i.i.d observations. Then, 
an HBD takes the form of
\begin{align}
    p(\theta, z, y\vert x) 
    & = 
    p(\theta)\prod_{i=1}^{N}p (z_i \vert \theta) \prod_{j=1}^{n_i}p (y_{ij} \vert \theta, z_i, x_{ij}), \label{eq: branch distributions alternate}
\end{align} 
where 
$y_i=\{y_{ij}\}_{j=1}^{n_i}$ and $x_i = \{x_{ij}\}_{j=1}^{n_i}$ are collections of conditionally i.i.d observations and covariates; $n_i \ge 1$ is the number of observations for branch $i$.

\paragraph{No local covariates.} Some applications do not involve the covariates $x_i$. In such cases, HBDs have a simplified form of
\begin{align}
    p(\theta, z, y) 
    & = 
    p(\theta)\prod_{i=1}^{N}p (z_i \vert \theta) p (y_{i} \vert \theta, z_i). 
\end{align}
In this paper, we will be using \cref{eq: branch distributions} and \cref{eq: branch distributions alternate} to refer HBDs---the results extend easily to case where there are no local covariates. 
(For instance, in \cref{sec: amortized dist}, we amortize using $(x_i, y_i)$  as inputs. When there are no covariates, 
we can amortize with just $y_i$.)

\subsection{Related Work}

Bayesian inference in hierarchical models is an old problem. 
The most common solutions are Markov chain Monte Carlo (MCMC)
and VI. A key advantage of VI is that gradients can sometimes
be estimated using only a subsample of data. 
\citet{hoffman2013stochastic} observe that inference in \emph{hierarchical}
models is still slow at large scale, since the number of parameters
scales with the dataset. Instead, they assume that $\theta$ and
$z_i$ from \cref{eq: branch distributions} are in conjugate exponential
families, and observe that for a
mean-field variational distribution $q(\theta)\prod_{i} q(z_i)$, 
the optimal $q(z_i)$ can be calculated in closed form for 
fixed $q(\theta)$. This is highly scalable, 
though it is limited to factorized approximations and requires a
conditionally conjugate target model.

A structured variational approximation like 
$q(\theta) \prod_{i} q(z_i \vert \theta)$ can be used which reflects
 the dependence of $z_i$ on $\theta$ 
 \cite{hoffman2015ssvi, sheth2016monte, ambrogioni2021automatic, johnson2016composing}.
However, this still has scalability problems in
general since the number of parameters grows in the size of
the data (\cref{sec: experiments}). To the best of our knowledge, 
the only approach that avoids this is the framework of structured 
stochastic VI \cite{hoffman2015ssvi, johnson2016composing}, which assumes the target is
conditionally conjugate, and that for a fixed $\theta$ an optimal
"local" distribution $q(z_i \vert \theta)$ can be calculated 
from local data. \citet{hoffman2015ssvi} address matrix factorization models and 
latent Dirichlet allocation, using Gibbs sampling to compute 
the local distributions. \citet{johnson2016composing} use amortization for conjugate models but do not consider the setting where local observations are a collection of i.i.d observations. 
Our approach is not strictly an
instance of either of these frameworks, as we do not assume conjugacy or
that amortization can exactly recover optimal 
local distributions \cite[Eq. 7]{hoffman2015ssvi}. 
Still the spirit is the same, and our approach should be seen
as part of this line of research.

Amortized variational approximations have been used to learn models with local variables \cite{kingma2013auto,edwards2016towards,ilse2019diva,bouchacourt2018multi}.
A particularly related instance of model learning is the Neural Statistician approach \cite{edwards2016towards}, where a ``statistics network'' learns representations of closely related datasets--the construction of this network is similar to our ``pooling network'' in \cref{sec: locally iid}.
However, in the Neural Statistician model there are no global variables $\theta$, and it is not obvious how to generalize their approach for HBDs. In contrast, our ``pooling network'' results from the analysis in \cref{sec: amortized dist}, reducing the architectural space when $\theta$ is present while retaining accuracy. 
Moreover, \emph{learning} a model is strictly different from our ``black-box" setting where we want to approximate the posterior of a given model,  
and the inference only has access to $\log p$ or $\nabla_{\theta, z}\log p$ (or their parts) \cite{ranganath14,kucukelbir2017automatic,ambrogioni2021automatic,aagrawal2020}.

\section{Joint approximations for HBD}

When dealing with HBDs, 
a non-structured distribution does not scale.
To see this, consider the naive VI objective---Evidence Lower Bound (ELBO, $\mathcal L$). 
Let $\qj$ be a joint variational approximation over $\rtheta$ and $\rz$; we use sans-serif font for random variables. Then,  
\begin{align}
    \ELBO{\qj}{p} = \mathbb{E}_{\qj(\rtheta, \rz)} \left[ \log \frac{p(\rtheta, \rz, y \vert x)}{\qj(\rtheta, \rz)} \right] , \label{eq: joitn elbo}
\end{align}
where $\phi$ are variational parameters.
Usually the above expectation is not tractable and one uses a Monte-Carlo estimator.
A single sample estimator is given by 
\begin{align}
    \widehat{\mathcal {L}} =  \log \frac{p(\rtheta, \rz, y \vert x)}{\qj(\rtheta, \rz)} , \label{eq: joint elbo estimate}
\end{align}
where $(\rtheta, \rz) \sim \qj$ and $ \widehat{\mathcal {L}}$ is unbiased. 
Since $\qj$ need not factorize, even a single estimate requires sampling all the latent variables at each step.
This is problematic when there are a large number of local latent variables.
One encounters the same scaling problem when taking the gradient of the ELBO.
We can estimate the gradient using any of the several available estimators \cite{ranganath14, kingma2013auto,pmlr-v32-rezende14, roeder2017sticking}; 
however, none of them scale if $\qj$ does not factorize \cite{hoffman2013stochastic}. 

\section{Branch approximations for HBD}
It is easy to see that the posterior distribution of the HBD \cref{eq: branch distributions} takes the form
$$
    p(\theta, z \vert y, x)  = 
            p(\theta \vert y, x)\prod_{i=1}^{N}p (z_i \vert \theta, y_i, x_i) \label{eq: branch posterior}.
$$
As such, it is natural to consider variational distributions that factorize the same way (see \cref{fig:q visual}.) 
In this section, we confirm this intuition---we start with any joint variational family which can have any dependence between $\theta$ and $z_1, \cdots, z_N$. 
Then, we define a corresponding ``branch'' family where $z_1, \cdots ,z_N$ are conditionally independent given $\theta$. We show that inference using the branch family will be at least as accurate as using the joint family.
We formalize the idea of branch distribution in the next definition.
\begin{defn}
    Let $\qj$ be any variational family with parameters $\phi$. We define $\qb$ to be a corresponding branch family
    if, for all $\phi$, there exists $(v, \{w_i\}_{i=1}^{N})$, such that, 
    \begin{align}
        \qb(\theta, z) = q_v(\theta) \prod_{i=1}^{N} q_{w_i} (z_i \vert \theta) = \qj (\theta) \prod_{i=1}^{N} \qj (z_i \vert \theta). \label{eq: q branch definition}
    \end{align} 
    \label{def: branch dists}
\end{defn}
\begin{wrapfigure}{r}{0.25\textwidth}
    \vspace{0pt}
    \centering
    \resizebox*{!}{0.09\textwidth}{

\begin{tikzpicture}[x=0.7cm,y=0.5cm, ->]

    \node[latent, minimum size=0.7cm]    (theta)  {$\theta$}; %

    \node[latent, below=of theta, xshift = -2cm, minimum size=0.7cm]    (z_1)  {$z_1$}; %
    \node[latent, below=of theta, minimum size=0.7cm, xshift = -1cm]    (z_2)  {$z_2$}; %
    \node[latent, below=of theta, xshift =2cm, minimum size=0.7cm]    (z_N)  {$z_N$}; %

    \edge{theta}{z_1, z_2, z_N}

    \path (z_2) -- node[auto=false]{\ldots} (z_N);

\end{tikzpicture}
    }
    \caption{$\ qb$ as in \cref{eq: q branch definition}.}
    \vspace{-12pt}
    \label{fig:q visual}
\end{wrapfigure}
Given a joint distribution, a branch distribution can always be defined by choosing $w$ and $v_i$ as the components of $\phi$ that influence $\qj(\theta)$ and  $\qj(z_i \vert \theta)$,
respectively, and choosing $q_w (\theta)$ and $q_{w_i}(z_i \vert \theta)$ correspondingly.
However, the choice is not unique (for instance, the parameterization can require transformations---different transformations can create different variants.) 

The idea to use $\qb$ is natural \cite{hoffman2015ssvi,ambrogioni2021automatic}. However, one might question if the branch variational family is as good as the original $\qj$; 
in \cref{thm: branch q better than q for hbd}, we establish that this is indeed true.

\begin{thm}
\label{thm: branch q better than q for hbd}Let $p$ be a HBD, and
 $\qj\pp{\theta,z}$ be a joint approximation family parameterized by $\phi$.
Choose a corresponding branch variational family $\qb(\theta, z)$ as in \cref{def: branch dists}.  
Then,
\[
\min_{v,w}\KL{\qb}p\leq\min_{\phi}\KL{\qj}p.
\]
\end{thm}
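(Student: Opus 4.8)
The plan is to prove the stronger pointwise statement: for \emph{every} $\phi$, the branch distribution $\qb$ obtained from $\qj$ via \cref{def: branch dists} satisfies $\KL{\qb}{p}\le\KL{\qj}{p}$. Taking the minimum over $\phi$ on the right-hand side, and noting that this particular $\qb$ belongs to the branch family, then gives the theorem.

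First I would apply the chain rule for KL divergence, using the fact (observed just above the theorem) that an HBD posterior factorizes as $p(\theta, z\vert y,x)=p(\theta\vert y,x)\prod_{i=1}^N p(z_i\vert\theta, y_i, x_i)$, and that any joint $\qj$ factorizes as $\qj(\theta)\,\qj(z\vert\theta)$. This yields
\[
\KL{\qj}{p}=\KL{\qj(\theta)}{p(\theta\vert y,x)}+\E_{\qj(\theta)}\Bigl[\KL{\qj(z\vert\theta)}{\textstyle\prod_{i} p(z_i\vert\theta,y_i,x_i)}\Bigr].
\]
Applying the same identity to $\qb$, whose $\theta$-marginal is $q_v(\theta)=\qj(\theta)$ and whose conditional is $\prod_i q_{w_i}(z_i\vert\theta)=\prod_i \qj(z_i\vert\theta)$, and using that the KL divergence between two product measures is the sum of the componentwise divergences,
\[
\KL{\qb}{p}=\KL{\qj(\theta)}{p(\theta\vert y,x)}+\E_{\qj(\theta)}\Bigl[\textstyle\sum_{i}\KL{\qj(z_i\vert\theta)}{p(z_i\vert\theta,y_i,x_i)}\Bigr].
\]
The first terms coincide, so it suffices to bound the inner divergences for each fixed $\theta$.

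The key step is the inequality, for each $\theta$,
\[
\KL{\qj(z\vert\theta)}{\textstyle\prod_{i} p(z_i\vert\theta,y_i,x_i)}\;\ge\;\textstyle\sum_{i}\KL{\qj(z_i\vert\theta)}{p(z_i\vert\theta,y_i,x_i)}.
\]
I would establish this via a Pythagorean-type decomposition of KL: letting $r(z\vert\theta)=\prod_i\qj(z_i\vert\theta)$ denote the product of the conditional marginals of $\qj(z\vert\theta)$, the cross term $\E[-\log\prod_i p(z_i\vert\theta,y_i,x_i)]$ depends on $\qj(z\vert\theta)$ only through its marginals, so
\[
\KL{\qj(z\vert\theta)}{\textstyle\prod_{i} p(z_i\vert\theta,y_i,x_i)}=\KL{\qj(z\vert\theta)}{r(z\vert\theta)}+\KL{r(z\vert\theta)}{\textstyle\prod_{i} p(z_i\vert\theta,y_i,x_i)},
\]
and the first term on the right (the total correlation of $\qj$, equivalently the entropy defect from subadditivity) is nonnegative, while the second equals $\sum_i\KL{\qj(z_i\vert\theta)}{p(z_i\vert\theta,y_i,x_i)}$. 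Combining the three displays gives $\KL{\qb}{p}\le\KL{\qj}{p}$ for every $\phi$, and taking minima over both sides completes the argument.

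The only substantive content is the middle inequality; the rest is bookkeeping with the chain rule for KL. The point that requires care is verifying that $\prod_i\qj(z_i\vert\theta)$ really is the product of the conditional marginals of $\qj(z\vert\theta)$ --- which is precisely what \cref{def: branch dists} guarantees --- since the Pythagorean decomposition needs $r$ to share every one-dimensional conditional marginal with $\qj(z\vert\theta)$.
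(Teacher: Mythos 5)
Your proof is correct, and while its skeleton matches the paper's---both establish the stronger pointwise claim that for \emph{every} $\phi$ the induced branch distribution $q'_\phi(\theta,z)=\qj(\theta)\prod_i\qj(z_i\vert\theta)$ has no larger divergence to the posterior, and both start from the chain rule for KL over $(\theta,z)$ together with the posterior factorization $p(\theta\vert y,x)\prod_i p(z_i\vert\theta,y_i,x_i)$---the central inequality is handled by a genuinely different device. The paper expands the $z$-block autoregressively, rewrites each term as $\KL{\qj(\rz_i\vert\rz_{<i},\rtheta)}{p(\rz_i\vert\rtheta,x_i,y_i)}$ using the conditional independence of the posterior, and then invokes joint convexity of KL (Jensen) to show that averaging out $\rz_{<i}$ can only decrease each term. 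You instead exploit that the target over $z$ given $\theta$ is a product measure, so the cross-entropy term depends on $\qj(z\vert\theta)$ only through its coordinatewise conditional marginals; this yields the Pythagorean split
\[
\KL{\qj(\rz\vert\rtheta)}{\textstyle\prod_i p(\rz_i\vert\rtheta,y_i,x_i)}=\KL{\qj(\rz\vert\rtheta)}{\textstyle\prod_i\qj(\rz_i\vert\rtheta)}+\textstyle\sum_i\KL{\qj(\rz_i\vert\rtheta)}{p(\rz_i\vert\rtheta,y_i,x_i)},
\]
with the first term a nonnegative (conditional) total correlation. The two arguments are equivalent in force, but yours has the small bonus of exhibiting the gap exactly, $\KL{\qj}{p}-\KL{q'_\phi}{p}=\E_{\qj(\rtheta)}\bigl[\KL{\qj(\rz\vert\rtheta)}{\prod_i\qj(\rz_i\vert\rtheta)}\bigr]$, which quantifies precisely how much is lost by correlations among the $z_i$ given $\theta$. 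The caveat you flag---that $\prod_i\qj(z_i\vert\theta)$ must really be the product of the conditional marginals of $\qj(z\vert\theta)$---is exactly what \cref{def: branch dists} supplies, and the final bookkeeping (the minimizing $\phi$ induces a feasible $(v,w)$, so the minimum over the branch family is no larger) goes through just as in the paper.
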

We stress that $\qb$ is a new variational family derived from but not identical to $\qj$.
\Cref{thm: branch q better than q for hbd} implies that we can optimize a branched variational family $\qb$ without compromising the quality of 
approximation (see \cref{app sec: proof of theorem} for proof.)
In the following corollary, we apply \cref{thm: branch q better than q for hbd} to a joint Gaussian to show that using a branch Gaussian will be equally accurate.
\begin{cor}
    \label{cor: branch gaussian}
    Let $p$ be a HBD, and let 
    \(
        \qj\pp{\theta,z}=\N\pars{\pp{\theta,z}\vert\mu,\Sigma}
    \)
    be a joint Gaussian approximation (with $\phi=\pp{\mu,\Sigma}$). Choose a variational family
    \[
    \qb\pp{\theta,z}=\N\pp{\theta\vert\mu_{0},\Sigma_0}\prod_{i=1}^N \N\pp{z_{i}\vert\mu_{i}+A_{i}\theta,\ \Sigma_i}
    \]
    with $v=\pp{\mu_{0},\Sigma_{0}}$ and $w_{i}=\pp{\mu_{i},\Sigma_{i},A_{i}}.$ 
    Then, $\displaystyle \min_{v,w}\KL{\qb}p\leq\min_{\phi}\KL{\qj}p.$ 
\end{cor}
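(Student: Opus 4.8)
The plan is to show that the branch family written in the statement is precisely \emph{a corresponding branch family} to the joint Gaussian in the sense of \cref{def: branch dists}, after which the inequality is immediate from \cref{thm: branch q better than q for hbd}. So there is essentially nothing to prove beyond checking that the Gaussian parameterization is rich enough.

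First I would fix an arbitrary $\phi=\pp{\mu,\Sigma}$ (assume $\Sigma\succ 0$ without loss of generality) and partition $\mu,\Sigma$ into blocks for $\theta$ and each $z_i$. By the standard Gaussian marginalization formula, $\qj(\theta)=\N\pp{\theta\mid\mu_\theta,\Sigma_{\theta\theta}}$. By the standard Gaussian conditioning formula---marginalizing out the other $z_j$ first and then conditioning on $\theta$---we get $\qj(z_i\mid\theta)=\N\pp{z_i\mid m_i+A_i\theta,\ S_i}$, where $A_i=\Sigma_{z_i\theta}\Sigma_{\theta\theta}^{-1}$, $S_i=\Sigma_{z_iz_i}-\Sigma_{z_i\theta}\Sigma_{\theta\theta}^{-1}\Sigma_{\theta z_i}$, and $m_i=\mu_{z_i}-A_i\mu_\theta$. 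In particular the conditional mean is affine in $\theta$ and the conditional covariance does not depend on $\theta$.

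Second, I would read off the parameters of the stated branch family that realize this factorization: take $v=\pp{\mu_0,\Sigma_0}=\pp{\mu_\theta,\Sigma_{\theta\theta}}$ and $w_i=\pp{\mu_i,\Sigma_i,A_i}=\pp{m_i,S_i,A_i}$. Then
\[
\qb(\theta,z)=\N\pp{\theta\mid\mu_0,\Sigma_0}\prod_{i=1}^N\N\pp{z_i\mid\mu_i+A_i\theta,\ \Sigma_i}=\qj(\theta)\prod_{i=1}^N\qj(z_i\mid\theta),
\]
so the equality in \cref{eq: q branch definition} holds for this $\phi$. Since $\phi$ was arbitrary, the stated family is a corresponding branch family for the joint Gaussian family, and \cref{thm: branch q better than q for hbd} gives $\min_{v,w}\KL{\qb}p\leq\min_\phi\KL{\qj}p$, which is the claim.

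The only real care needed is bookkeeping. One must make sure that $\qj(z_i\mid\theta)$ in \cref{def: branch dists} refers to the conditional of the joint Gaussian with the remaining $z_j$ \emph{integrated out} (not conditioned on), and that the chosen branch parameterization $\pp{\mu_i,\Sigma_i,A_i}$ is exactly rich enough to express every such conditional---which it is, since affine-mean, $\theta$-independent-covariance Gaussians are precisely the conditionals arising from a jointly Gaussian pair. The nonsingularity of $\Sigma$ can be assumed without loss of generality, or handled by replacing $\Sigma_{\theta\theta}^{-1}$ with a pseudoinverse / passing to a limit; this is not the substance of the argument.
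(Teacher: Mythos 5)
Your proposal is correct and matches the paper's own treatment: the appendix derivation likewise partitions $\pp{\mu,\Sigma}$ into blocks, reads off $\qj(z_i\vert\theta)=\N\pp{z_i\vert\mu_{z_i}+\Sigma_{\theta z_i}^\top\Sigma_\theta^{-1}(\theta-\mu_\theta),\ \Sigma_{z_iz_i}-\Sigma_{\theta z_i}^\top\Sigma_\theta^{-1}\Sigma_{\theta z_i}}$, identifies this with the $\pp{\mu_i,\Sigma_i,A_i}$ parameterization, and then invokes \cref{thm: branch q better than q for hbd}. Your added care about marginalizing the other $z_j$ before conditioning, and about the affine-mean/constant-covariance form being exactly expressible, is the right bookkeeping and is consistent with the paper.
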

In the above corollary, the structured family  $\qb$ is chosen such that it can represent any branched Gaussian distribution. 
Notice, the mean of the conditional distribution is an affine function of $\theta$. 
This affine relationship appears naturally when you factorize the joint Gaussian over $\left(\rtheta,\rz\right)$. 
For more details see \cref{app sec: derivation of branch gaussian}. 

\begin{figure*}[!h]
    \centering
    \hspace{-40pt}
    \hspace{-10pt}
    \subcaptionbox{Estimation with $\qj$ as in \cref{eq: joint elbo estimate}\label{fig: pseudocode joint}}
    [0.5\linewidth]{
        \resizebox*{0.175\textwidth}{!}{
            \hspace{-50pt}
            \begin{minipage}[t][0.105\textheight]{0.3\textwidth}
                \begin{algorithmic}
                \State \texttt{JointELBO}$(\phi, y, x)$
                \State \hspace{5mm}$\theta, z \sim \qj(\rtheta, \rz)$
                \State \hspace{5mm}{${\displaystyle \widehat{\mathcal L} \gets \log \frac{p(\theta, z, y \vert x)}{\qj (\theta, z)} }$}
            \end{algorithmic}
        \end{minipage}
    }}
    \subcaptionbox{
        Estimation with $\qb$ as in \cref{eq: branch elbo estimate}
        \label{fig: pseudocode branch}}
        [0.5\linewidth]{
            \resizebox*{0.425\textwidth}{!}{
        \begin{minipage}[t][0.105\textheight]{0.45\textwidth}
            \begin{algorithmic}
                \State \texttt{BranchELBO}$(v, w, y, x)$
                \State \hspace{5mm}$\theta \sim q_{v}(\rtheta)$
                \State \hspace{5mm}$z_i \sim q_{w_i}(\rz_i \vert \theta)$ for $i \in \{1, \cdots, N\}$.
                \State \hspace{5mm}{${\displaystyle \widehat{\mathcal{L}} \gets \log\frac{p\left(\theta\right)}{q_v \left(\theta\right)}+\sum_{i = 1}^{N}\log\frac{p\left(z_{i},y_{i}\vert\theta,x_{i}\right)}{q_{w_i}\left(z_{i}\vert\theta\right)} }$}
            \end{algorithmic}
        \end{minipage}
    }}
    \vspace{10pt}
    \\
    \hspace{-30pt}
    \subcaptionbox{Estimation with $\qb$ as in \cref{eq: branch susbsamples elbo estimate}
        \label{fig: pseudocode branch subsampled}}
        [0.48\linewidth]{
        \resizebox*{0.45\textwidth}{!}{
        \begin{minipage}[t][0.125\textheight]{0.49\textwidth}
            \begin{algorithmic}
                \State \texttt{SubSampledBranchELBO}$(v, w, y, x)$
                \State \hspace{5mm}$\theta \sim q_{v}(\rtheta)$
                \State \hspace{5mm}$B \sim \texttt{Minibatch($\mathsf{B}$)}$
                \State \hspace{5mm}$z_i \sim q_{w_i}(\rz_i \vert \theta)$ for $i \in B$
                \State \hspace{5mm}{${\displaystyle \widehat{\mathcal{L}} \gets \log\frac{p\left(\theta\right)}{q_v \left(\theta\right)}+\frac{N}{\verts{B}}\sum_{i\in B}\log\frac{p\left(z_{i},y_{i}\vert\theta,x_{i}\right)}{q_{w_i}\left(z_{i}\vert\theta\right)} }$}
            \end{algorithmic}
        \end{minipage}    
    }
    }
    \hspace{0pt}
    \subcaptionbox{\label{fig: pseudocode amort}Estimation with $\qab$ for $p$ as in \cref{eq: symmetry condition}}
    [0.48\linewidth]{
        \resizebox*{0.45\textwidth}{!}{
        \begin{minipage}[t][0.125\textheight]{0.49\textwidth}
            \begin{algorithmic}
                \State \texttt{AmortizedSubSampledBranchELBO}$(v, u, y, x)$
                \State \hspace{5mm}$\theta \sim q_{v}(\rtheta)$
                \State \hspace{5mm}$B \sim \texttt{Minibatch($\mathsf{B}$)}$
                \State \hspace{5mm}$w_i \gets \net_u (x_i, y_i )$ for $i \in B$
           \State \hspace{5mm}$z_i \sim q_{w_i}(\rz_i \vert \theta)$ for $i \in B$
                \State \hspace{5mm}{${\displaystyle \widehat{\mathcal{L}} \gets \log\frac{p\left(\theta\right)}{q_v \left(\theta\right)}+\frac{N}{\verts{B}}\sum_{i\in B}\log\frac{p\left(z_{i},y_{i}\vert\theta,x_{i}\right)}{q_{w_i}\left(z_{i}\vert\theta\right)} }$}
            \end{algorithmic}
        \end{minipage}
    }}

    \caption{Pseudo codes for ELBO estimation with different variational methods; 
            $w=\{w_i\}_{i=1}^N$, $y = \{y_i\}_{i=1}^{N}$, and $y_i = \{y_{ij}\}_{j=1}^{n_i}$ ($x$ is defined similar to $y$.) 
            (a) Estimates ELBO for a joint approximation; 
            (b) to (d) estimate ELBO for branch approximations;
            (c, d) use subsampling to estimate ELBO;
            (d) uses amortized conditionals; 
            (a) to (c) work for any HBD, and (d) assumes $p$ is a symmetric HBD as in \cref{eq: symmetry condition}.
            For models where $n_i > 1$, we use the $\net_u$ as in \cref{fig: featnet}.
            \texttt{Minibatch} is some distribution over the set of possible minibatches
            and $\verts{B}$ denotes the number of samples in a minibatch $B$.
            \label{fig: pseudocodes}
            }
\end{figure*}

\subsection{Subsampling in branch distributions}
In this section, we show that if $p$ is an HBD
and $\qb$  is as in \cref{def: branch dists}, 
we can estimate ELBO using local observations and scale better.  
Consider the ELBO
\begin{align}
\ELBO{\qb}{p} & =\E_{\qb\left(\rtheta,\rz\right)}\left[\log\frac{p\left(\rtheta,\rz,y\vert x\right)}{\qb\left(\rtheta,\rz\right)}\right]
\nonumber\\
 & =\E_{q_{v}\left(\rtheta\right)}\left[\log\frac{p\left(\rtheta\right)}{q_{v}\left(\rtheta\right)}\right]+\sum_{i=1}^{N}\E_{q_{v}\left(\rtheta\right)}\E_{q_{w_{i}}\left(\rz_{i}\vert\theta\right)}\left[\log\frac{p\left(\rz_{i},y_{i}\vert\theta,x_{i}\right)}{q_{w_{i}}\left(\rz_{i}\vert\theta\right)}\right]. \label{eq: branch elbo decomposition}
\end{align}
Without assuming special structure (e.g. conjugacy) the above expectations will not be available in closed form. To estimate the ELBO, let
$(\rtheta,\left\{ \rz_{i}\right\} _{i=1}^{N})\sim \qb$.
Then, an unbiased estimator is
\begin{align}
    \widehat{\mathcal{L}} & =\log\frac{p\left(\rtheta\right)}{q_{v}\left(\rtheta\right)}+\sum_{i=1}^N\left[\log\frac{p\left(\rz_{i},y_{i}\vert\rtheta,x_{i}\right)}{q_{w_{i}}\left(\rz_{i}\vert\rtheta\right)}\right]. \label{eq: branch elbo estimate}
\end{align}
Unlike the joint estimator of \cref{eq: joint elbo estimate}, one can subsample the terms in \cref{eq: branch elbo estimate} to create a new unbiased estimator. Let\textbf{ $\mathsf B$} be randomly selected minibatch of indices from $\{1,2,\dots,N\}$.
Then, 

\begin{align}
    \widehat{\mathcal{L}} & =\log\frac{p\left(\rtheta\right)}{q_{v}\left(\rtheta\right)}+\frac{N}{\verts{\mathsf{B}}}\sum_{i\in \mathsf B}\left[\log\frac{p\left(\rz_{i},y_{i}\vert\rtheta,x_{i}\right)}{q_{w_{i}}\left(\rz_{i}\vert\rtheta\right)}\right], \label{eq: branch susbsamples elbo estimate}
\end{align}
is another unbiased estimator of ELBO. In \cref{sub@fig: pseudocode branch,,sub@fig: pseudocode branch subsampled}, 
we present the complete pseudocodes for ELBO estimation with 
and without subsampling in branch distributions.  
Unsurprisingly, the same summation structure appears for gradients estimators of branch 
ELBO, allowing for efficient gradient estimation. With 
subsampled evaluation and training, 
branch distributions are immensely computationally efficient---in 
our experiments, we scale to models with $10^3$ \emph{times} more latent variables by switching to branch approximations (see \cref{fig: real problem results}).

While branch distributions are immensely more scalable than joint approximations, 
the number of parameters still scales as $\mathcal O (N)$. 
In the next section, we demonstrate that for symmetric HBDs, 
we can share parameters for the local conditionals (amortize) to allow further scalability.

\section{Amortized branch approximations}
\label{sec: amortized dist}
In this section, we discuss how one can amortize the 
local conditionals of a branch approximations when the 
target HBD is symmetric (see \cref{eq: symmetry condition}.)
We first formally introduce the amortized branch distributions in the next definition 
and then justify the amortization for symmetric HBD.
\begin{defn}
    Let $\qj$ be a joint approximation
    and let $\qb$ be as in \cref{def: branch dists}.
    Suppose $\net_u(x_i, y_i)$ is some parameterized map (with parameters $u$) 
    from local observations $(x_i, y_i)$ to space of
    $w_i$.
    Then, 
    \begin{align}
        \qab (\theta, z) = q_v (\theta) \prod_{i=1}^{N} q_{{\normalfont  \net}_u(x_i, y_i)}(z_i \vert \theta)
    \end{align}
    is a corresponding amortized branch distribution. 
    \label{def: amort branch dist}
\end{defn}
The idea to amortize is natural once you examine the 
optimization for symmetric HBDs. 
Consider the optimization for objective in \cref{eq: branch elbo decomposition}. 
\begin{align*}
\max_{v,w}\ELBO{\qb}p &=\max_{v,w} \bracs{\E_{q_{v}\left(\rtheta\right)}\left[\log\frac{p\left(\rtheta\right)}{q_{v}\left(\rtheta\right)}\right]+\sum_{i=1}^{N}\E_{q_{v}\left(\rtheta\right)}\E_{q_{w_{i}}\left(\rz_{i}\vert\rtheta\right)}\left[\log\frac{p\left(\rz_{i},y_{i}\vert\rtheta,x_{i}\right)}{q_{w_{i}}\left(\rz_{i}\vert\rtheta\right)}\right]}\\
& =\max_v \bracs{\E_{q_{v}\left(\rtheta\right)}\left[\log\frac{p\left(\rtheta\right)}{q_{v}\left(\rtheta\right)}\right]+\sum_{i=1}^{N}\max_{w_i}\E_{q_{v}\left(\rtheta\right)}\E_{q_{w_{i}}\left(\rz_{i}\vert\rtheta\right)}\left[\log\frac{p\left(\rz_{i},y_{i}\vert\rtheta,x_{i}\right)}{q_{w_{i}}\left(\rz_{i}\vert\rtheta\right)}\right]}.
\end{align*}
The crucial observation in the above equation is that for any given $v$, the optimal solution of inner optimization depends only on local
data points $\left(x_{i},y_{i}\right)$, i.e., 
\begin{align}
    w^*_i &= \argmax_{w_i}\E_{q_{v}\left(\rtheta\right)}\E_{q_{w_{i}}\left(\rz_{i}\vert\rtheta\right)}\left[\log\frac{p\left(\rz_{i},y_{i}\vert\rtheta,x_{i}\right)}{q_{w_{i}}\left(\rz_{i}\vert\rtheta\right)}\right ].
\label{eq: optimal local wi}
    \end{align}
Now, notice that if $p$ and $\qb$ have symmetric conditionals, 
then, for each $i$, 
we solve the same optimization over $w_{i}$, just with different parameters $y_i$ and $x_i$. 
Thus, one could replace the optimization over $w_i$ with an 
optimization over a parameterized function 
from $(x_i, y_i)$ to the space of $w_i$. Formally, when the network $\net_u$ 
is sufficiently capable, we make the following claim. 
\begin{claim}
    Let $p$ be a symmetric HBD and let $\qj$ be some joint 
    approximation. Let $\qab$ be as in \cref{def: branch dists}.
    Suppose that for all $v$, there exists a $u$, such that, 
    \begin{align}
        {\normalfont \net}_u (x_i, y_i) = \argmax_{w_i}\E_{q_{v}\left(\rtheta\right)}\E_{q_{w_{i}}\left(\rz_{i}\vert\rtheta\right)}\left[\log\frac{p\left(\rz_{i},y_{i}\vert\rtheta,x_{i}\right)}{q_{w_{i}}\left(\rz_{i}\vert\rtheta\right)}\right ]. 
    \end{align}
    Then, 
    \begin{align}
        \min_{v, u} \KL{\qab}p \le \min_{\phi} \KL{\qj}p
    \end{align}
    \label{claim: amortization}
\end{claim}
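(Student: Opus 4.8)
The plan is to reduce the statement about KL divergences to one about ELBOs, use the additive decomposition of the branch ELBO from \cref{eq: branch elbo decomposition}, observe that the amortized family can realize the per-branch optimum of the branch family under the stated hypothesis, and then invoke \cref{thm: branch q better than q for hbd}. First recall that for any family $q$ we have $\KL{q}{p} = -\ELBO{q}{p} + \log p(y\vert x)$ with the evidence $\log p(y\vert x)$ independent of $q$, so $\min \KL{q}{p} = -\max \ELBO{q}{p} + \log p(y\vert x)$ and it suffices to prove $\max_{v,u}\ELBO{\qab}{p}\ge\max_{\phi}\ELBO{\qj}{p}$. Since $\qab(\theta,z)=q_v(\theta)\prod_i q_{\net_u(x_i,y_i)}(z_i\vert\theta)$ is itself of branch form with $w_i=\net_u(x_i,y_i)$, the decomposition \cref{eq: branch elbo decomposition} applies; writing
\[
h_i(v,w_i):=\E_{q_v(\rtheta)}\E_{q_{w_i}(\rz_i\vert\rtheta)}\bracs{\log\frac{p(\rz_i,y_i\vert\rtheta,x_i)}{q_{w_i}(\rz_i\vert\rtheta)}},\qquad r(v):=\E_{q_v(\rtheta)}\bracs{\log\frac{p(\rtheta)}{q_v(\rtheta)}},
\]
we have $\ELBO{\qb}{p}=r(v)+\sum_{i=1}^N h_i(v,w_i)$ and $\ELBO{\qab}{p}=r(v)+\sum_{i=1}^N h_i\pp{v,\net_u(x_i,y_i)}$.

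Now compare the two maximizations. Because, for fixed $v$, the $w_i$ enter the branch ELBO through separate summands, the optimization splits, exactly as in the display preceding the claim:
\[
\max_{v,w}\ELBO{\qb}{p}=\max_v\bracs{\,r(v)+\sum_{i=1}^N\max_{w_i}h_i(v,w_i)\,}.
\]
For the amortized family, $\max_{v,u}\ELBO{\qab}{p}=\max_v\bracs{r(v)+\max_u\sum_i h_i\pp{v,\net_u(x_i,y_i)}}$. Each summand is bounded by its own maximum, so $\max_u\sum_i h_i\pp{v,\net_u(x_i,y_i)}\le\sum_i\max_{w_i}h_i(v,w_i)$ for every $v$; conversely, the hypothesis of the claim provides, for each $v$, a $u$ with $\net_u(x_i,y_i)=\argmax_{w_i}h_i(v,w_i)$ for all $i$ simultaneously, so that the bound is met with equality. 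Hence $\max_{v,u}\ELBO{\qab}{p}=\max_{v,w}\ELBO{\qb}{p}$. Combining this with \cref{thm: branch q better than q for hbd} in ELBO form, $\max_{v,w}\ELBO{\qb}{p}\ge\max_{\phi}\ELBO{\qj}{p}$, and converting back to KL divergences gives $\min_{v,u}\KL{\qab}{p}\le\min_{\phi}\KL{\qj}{p}$.

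\textbf{Main obstacle.} The delicate step is $\max_{v,u}\ELBO{\qab}{p}=\max_{v,w}\ELBO{\qb}{p}$, which leans entirely on the hypothesis that a \emph{single} network parameter $u$ can hit the per-branch argmax of every one of the $N$ local subproblems at the value of $v$ that is optimal for the branch family. Two technical wrinkles deserve care: (i) attainment of the maxima over $v$ and $w_i$ (and of the $\argmax$) — if these are only suprema, the argument should be run with an $\varepsilon$-slack, choosing a $v^\varepsilon$ that is $\varepsilon$-optimal for the branch family and then invoking the hypothesis at $v^\varepsilon$; and (ii) non-uniqueness of $\argmax_{w_i}h_i(v,w_i)$, which is harmless provided $\net_u$ is read as returning \emph{some} maximizer. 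Note that symmetry of the HBD is what makes this hypothesis plausible — the one map $\net$ must serve all branches — but symmetry is not otherwise invoked once the hypothesis is granted.
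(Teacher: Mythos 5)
Your proposal is correct and follows essentially the same route as the paper's proof: decompose the branch ELBO, split the maximization over $v$ and the $w_i$, use the hypothesis to realize the per-branch optima with a single $u$, and then chain with \cref{thm: branch q better than q for hbd} before converting back to KL via the evidence decomposition. Your additional remarks on attainment of the maxima and non-uniqueness of the argmax are sensible refinements but do not change the substance of the argument.
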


Note, we only amortize the conditional distribution
$q_{w_{i}}\left(z_{i}\vert\theta\right)$ and leave $q_{v}\left(\theta\right)$
unchanged. 
In practice, of course, we do not have perfect amortization functions. 
The quality of the amortization
depends on our ability to parameterize
and optimize a powerful neural network. 
In other words, we make the following 
approximation 
\begin{align}
    {\normalfont \net}_u (x_i, y_i) \approx \argmax_{w_i}\E_{q_{v}\left(\rtheta\right)}\E_{q_{w_{i}}\left(\rz_{i}\vert\rtheta\right)}\left[\log\frac{p\left(\rz_{i},y_{i}\vert\rtheta,x_{i}\right)}{q_{w_{i}}\left(\rz_{i}\vert\rtheta\right)}\right ]. 
\end{align}
In our experiments, we found the  
amortized approaches work well even with
moderately sized networks.
Due to parameter sharing, the amortized approaches converge much faster
than other alternatives (especially, true for larger models; see \cref{fig: real problem results} and \cref{tab: movielens results}). 

\section{Amortized branch approximations for i.i.d. observations}
\label{sec: locally iid}

In the previous section, we discussed how we could amortize branch approximations for symmetric HBDs. 
However, in some applications, the construction of amortization network $\net_u$ is not as straightforward. Consider the case when we have a varying number of local i.i.d observations for each local latent variable. 
In this section, we highlight the problem with naive amortization for locally i.i.d HBDs, and present a simple solution to alleviate them. 

Mathematically, for locally i.i.d HBDs we have $y_{i} = \{y_{ij}\}_{j=1}^{n_i}$ and $x_i = \{x_{ij}\}_{j=1}^{n_i}$, such that the conditional over $y_i$ factorizes as 
\[
p\left(y_{i}\vert x_{i},z_{i},\theta\right)=\prod_{j=1}^{n_{i}}p\left(y_{ij}\vert x_{ij},z_{i},\theta\right).
\]
Now, if $x_{i}$ and $y_{i}$ are directly input to the amortization network $\net_u$, the input
size to the network would change for different $i$ (notice we have
$n_{i}$ observations for $i^{\text{th}}$ local variable.) 
Another problem is that the optimal variational parameters are invariant to the order in which
the i.i.d. observations are presented. 
For instance, consider two data points: $(x_i, y_i) = [(x_{i1}, y_{i1}), \dots, (x_{i n_i}, y_{i n_i})]$ and $(x'_i, y'_i) = [(x_{i n_i}, y_{i n_i}), \dots, (x_{i1}, y_{i1})]$. 
A naive amortization scheme will evaluate very different conditionals for these two data points
because $\net_u (x_i, y_i)$ and $\net_u (x'_i, y'_i)$ will be different.

\begin{wrapfigure}{r}{0.33\textwidth}
    \vspace{-0pt}
    \begin{algorithmic}
        \State $\net_u(x_i, y_i)$
        \State \hspace{5mm}\textbf{for} {$j$\textbf{ in }$\{1, 2, \dots, n_i\}$} 
        \State \hspace{5mm}\hspace{5mm}\resizebox{0.7\hsize}{!}{$e_j \gets \texttt{feat\_net}_u (x_{ij}, y_{ij})$}
        \State \hspace{5mm}$e \gets \texttt{pool}(\{e_j\}_{j=1}^{n_i})$
        \State \hspace{5mm}$w_i \gets \texttt{param\_net}_u (e)$
        \State \hspace{5mm}\textbf{return} $w_i$
    \end{algorithmic}
    \caption{Psuedocode for $\net_u$ for locally i.i.d symmetric HBD.
    \label{fig: featnet}}
    \vspace{-10pt}
\end{wrapfigure}
To deal with both issues: variable length input and permutation
invariance, we suggest learning a ``feature network'' and "pooling function" based amortization network; 
this is reminiscent of "deep sets" \cite{deepsets} albeit here intended not just to enforce permutation invariance but also to deal with inputs of different sizes.  
Firstly, a feature network \texttt{feat\_net} takes each $\pp{x_{ij},y_{ij}}$ pair and returns a vector of features $e_j$.
Secondly, a pooling function \texttt{pool} takes the collection $\{e_j\}_{j=1}^{n_i}$ and achieves the two aims. 
First, it collapses $n_i$ feature vectors into a single fixed-sized feature $e$ (with the same dimensions as $e_j$). 
Second, pooling is invariant by construction to the order of observations (for example, pooling function would take a dimension-wise mean or sum across $j$.) 
Finally, this pooled feature vector $e$ is input to another network \texttt{param\_net} that
returns the final parameters $w_{i}$.
The pseudocode for a $\net_u$ with feature networks is available in \cref{fig: featnet}.
In \cref{tab: method applicability}, in appendix, we summarize the applicability of proposed variational methods to different HBD variants.

\begin{table}[t]
    \caption{All variational families used in our experiments. $\Sigma$ denotes a 
    generic covariance matrix and $\sigma^2$ denotes a diagonal 
    covariance.}
    \label{tab: methods summary}
    \begin{center}
    \resizebox*{\textwidth}{!}{
            \begin{tabular}[!h]{@{} llll @{}}
                \toprule
                \textbf{Gaussian Family} & $\qj$ as in \cref{eq: joitn elbo} & $\qb$ as in \cref{eq: q branch definition} & $\qab$ as in \cref{def: amort branch dist}\\
                \midrule
                Dense & $\N (\theta, z \vert \mu, \Sigma)$ & \resizebox{0.35\hsize}{!}{$\mathsmaller{\displaystyle \N (\theta \vert \mu_0, \Sigma_0) \prod_{i=1}^{N} \N (z_i \vert \mu_i + A_i \theta, \Sigma_i)}$} & \resizebox{0.35\hsize}{!}{${\displaystyle \N (\theta \vert \mu_0, \Sigma_0) \prod_{i=1}^{N} \N (z_i \vert \mu_i + A_i \theta, \Sigma_i)}$}\\
                & {\small $\phi = (\mu, \Sigma)$}&  {\small $v = (\mu_0, \Sigma_0)$, $w_i  = (\mu_i, A_i, \Sigma_i)$}&  {\small $v = (\mu_0, \Sigma_0), (\mu_i, A_i, \Sigma_i) = \net_{u}(x_i, y_i)$}\\[1.5mm]
                Block Diagonal & $\N (\theta \vert \mu_0, \Sigma_0)\N (z \vert \mu_1, \Sigma_1)$ & \resizebox{0.3\hsize}{!}{${\displaystyle \N (\theta \vert \mu_0, \Sigma_0) \prod_{i=1}^{N} \N (z_i \vert \mu_i, \Sigma_i)}$} & \resizebox{0.3\hsize}{!}{${\displaystyle \N (\theta \vert \mu_0, \Sigma_0) \prod_{i=1}^{N} \N (z_i \vert \mu_i, \Sigma_i)}$}\\
                &  {\small $\phi = (\mu_0, \mu_1, \Sigma_0, \Sigma_1)$}&  {\small $v = (\mu_0, \Sigma_0)$, $w_i = (\mu_i, \Sigma_i)$} &  {\small $v = (\mu_0, \Sigma_0) \text{, }(\mu_i, \Sigma_i) = \net_{u}(x_i, y_i)$}\\[1.5mm]
                Diagonal & $\N (\theta, z \vert \mu,  \sigma^2)$ & \resizebox{0.3\hsize}{!}{${\displaystyle \N (\theta \vert \mu_0, \sigma_0^2) \prod_{i=1}^{N} \N (z_i \vert \mu_i, \sigma_i^2)}$} & \resizebox{0.3\hsize}{!}{${\displaystyle
                \N (\theta \vert \mu_0, \sigma^2_0) \prod_{i=1}^{N} \N (z_i \vert \mu_i, \sigma^2_i)}$}\\[1.5mm]
                    &  {\small $\phi = (\mu, \sigma^2)$}&  {\small $v = (\mu_0, \sigma^2_0)$, $w_i = (\mu_i, \sigma_i^2)$}&  {\small $v = (\mu_0, \sigma_0^2)$, $(\mu_i, \sigma_i^2) = \net_{u}(x_i, y_i)$}\\
                \bottomrule                
            \end{tabular}
         }
        \end{center}
\end{table}

\section{Experiments}
\label{sec: experiments}
\begin{wrapfigure}{r}{0.32\textwidth}
    \vspace{-12pt}
    \centering
    \includegraphics[trim = 130 600 120 10, width=0.2\textwidth]{./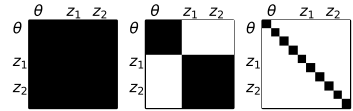}
    \caption{
         Visualization of covariances of 
        dense, block-diagonal, 
        and diagonal Gaussians.
        For each, 
        we experiment with  
        $\qj$, $\qb$, and $\qab$ methods.}
    \vspace{-22pt}
    \label{fig:cov visual}
\end{wrapfigure}
We conduct experiments on a synthetic and a real-world problem. 
For each, we consider three inference methods: 
using a joint distribution, using a branch variational approximation, 
and using our amortized approach. 

For each method, we consider three variational approximations 
a completely diagonal Gaussian, 
a block-diagonal Gaussian (with blocks for $\theta$ and $z$) and 
a dense Gaussian; see \cref{fig:cov visual} for a visual. 
(For each choice of a joint distribution,
the corresponding $\qb$ is used for the branch variational
approximation and the corresponding $\qab$ for the amortized
approach; see \cref{tab: methods summary} for details.) 

We use reparameterized gradients \cite{kingma2013auto} and  optimize with Adam \cite{kingma2014adam}
(see \cref{app sec: experimental details} for complete experimental details.) 
In \cref{app sec: practicaL challenges}, we discuss some of the 
observations we had during experimentation involving parameterization,
\texttt{feat-net} architecture, batch-size selection, initialization, and gradient estimators. 

\subsection{Synthetic problem}
The aim of the synthetic experiment is to work with models where 
we have access to closed-form posterior (and the marginal likelihood.)
If the variational family contains the posterior,
one can expect the methods to perform close to ideal (provided we can optimize well.) 

For our experiments, we use the following hierarchical regression model
\begin{align}
    p(\theta, z, y \vert x) &= \mathcal{N}(\theta \vert 0, I) \prod_{i=1}^{N} \mathcal{N}(z_i \vert \theta, I) \prod_{j=1}^{n_i}\mathcal{N}(y_{ij} \vert x_{ij}^\top z_i, 1),
\end{align}
where $\mathcal{N}$ denotes a Gaussian distribution, 
and $I$ is an identity matrix (see \cref{app sec: synthetic details} 
for posterior and the marginal closed-form expressions.)

To demonstrate the performance of our methods, we experiment with three different problems scale
(correspond to three different models with $N =$ 10, 1K, and $100$K.) 
Synthetic data is created using forward sampling for each of the scale variants independently. 
We avoid any test data and metrics for the synthetic problem as the log-marginal is known in closed form.
The inference results are present in \cref{fig: synthetic results} (in appendix.) 
In all cases, amortized distributions perform favorably when compared to branch and joint distributions. 
\begin{table}[b]
    \centering
    \caption{
        Inference results for the MovieLens25M problem. 
        For both metrics, we 
        draw a fresh batch of 10,000 samples from the 
        final posterior. All values are in nats (higher is better).
        \vspace{7pt}}
        \label{tab: movielens results}
    \resizebox*{0.9\textwidth}{!}{
        \begin{tabular}[h]{@{} ll rrr rrr @{}}
            \toprule
            \multicolumn{2}{@{}l@{}}{Metric} & \multicolumn{3}{c}{Final ELBO} & \multicolumn{3}{c}{Test likelihood}\\
            \multicolumn{2}{@{}l@{}}{$\approx$ \# train ratings} & 2.5K & 180K & 18M & 2.5K & 180K & 18M\\[2.5mm]
            \multicolumn{2}{@{}l@{}}{Methods (see \cref{tab: methods summary})} & \multicolumn{6}{l}{}\\
            \midrule
            Dense & $\qj$ &   -1572.31 &             &             & -166.37 &           &             \\
                        & $\qb$ &   -1572.39 & -1.0368e+05 & -1.1413e+07 & -166.66 & -11054.43 & -1.3046e+06 \\
                        & $\qab$ &   -1572.45 & -1.0352e+05 & -1.0665e+07 & -166.64 & -10976.38 & -1.1476e+06 \\[3mm]
            Block  & $\qj$ &   -1579.04 &             &             & -167.36 &           &             \\
            Diagonal            & $\qb$ &   -1579.05 & -1.0350e+05 & -1.1078e+07 & -166.97 & -10987.17 & -1.2538e+06 \\
                        & $\qab$ &   -1579.06 & -1.0353e+05 & -1.0665e+07 & -166.96 & -10975.96 & -1.1484e+06 \\[3mm]
            Diagonal & $\qj$ &   -1592.59 &             &             & -167.39 &           &             \\
                        & $\qb$ &   -1592.64 & -1.0428e+05 & -1.1325e+07 & -167.31 & -10977.95 & -1.2713e+06 \\
                        & $\qab$ &   -1592.64 & -1.0430e+05 & -1.0736e+07 & -167.29 & -10980.75 & -1.1497e+06 \\
               \bottomrule
        \end{tabular}
    }
\end{table}

\subsection{MovieLens}

Next, we test our method on the MovieLens25M \cite{harper2015movielens}, a dataset of 25 million movie ratings for over 62,000 movies, rated by 162,000 users, along with a set of features (tag relevance scores \cite{vig2012tag}) for each movie.

Purely, to make experiments more efficient on GPU hardware, we pre-process the data to drop users with more than 1,000 ratings---leaving around $20$M ratings.
Also, for the sake of efficiency, we PCA the movie features to reduce their dimensionality to 10. 
We used a train-test split such that, for each user, one-tenth of the ratings are in the test set. 
This gives us $\approx$ 18M ratings for training (and $\approx$ 2M ratings for testing.) 
\begin{figure*}[!h]
    \centering
    \resizebox*{\textwidth}{!}{
        \begin{subfigure}[b]{\textwidth}
            \centering
            \includegraphics[trim=0 0 0 0,clip,width=\textwidth]{./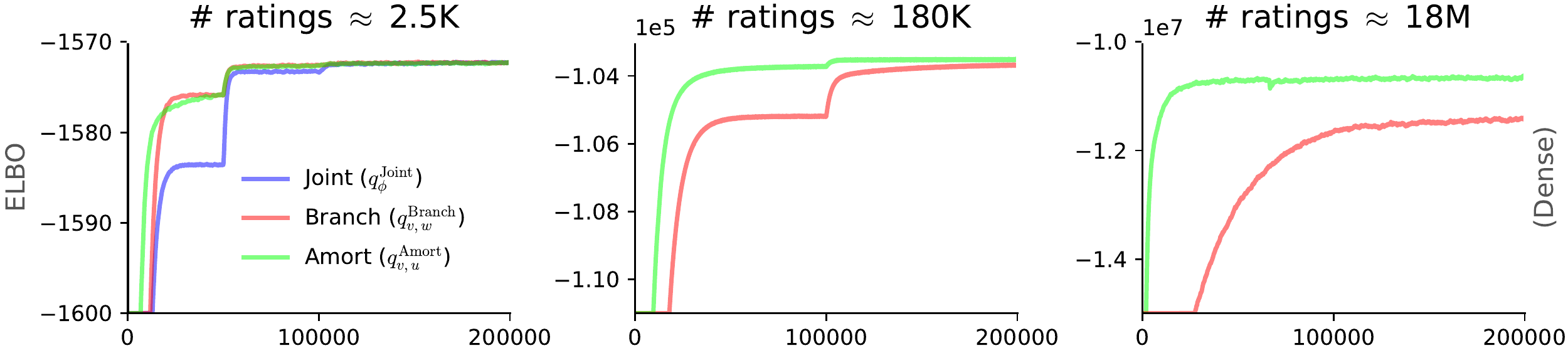}
        \end{subfigure}
    }
    \resizebox*{\textwidth}{!}{
            \begin{subfigure}[b]{\textwidth}
                \centering
                \includegraphics[trim=0 0 0 0,clip,width=\textwidth]{./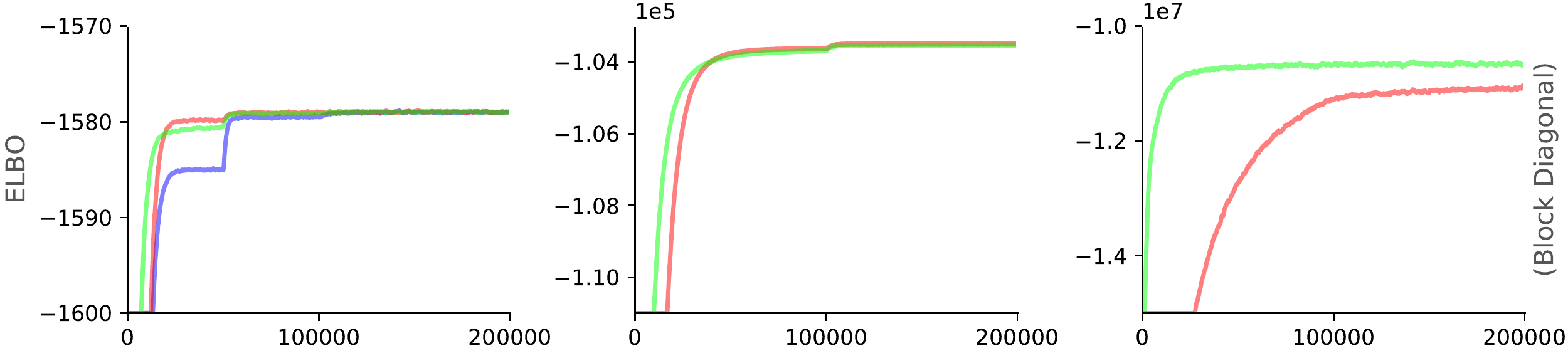}
            \end{subfigure}
    }
    \resizebox*{\textwidth}{!}{
                \begin{subfigure}[b]{\textwidth}
                    \centering
                    \includegraphics[trim=0 0 0 0,clip,width=\textwidth]{./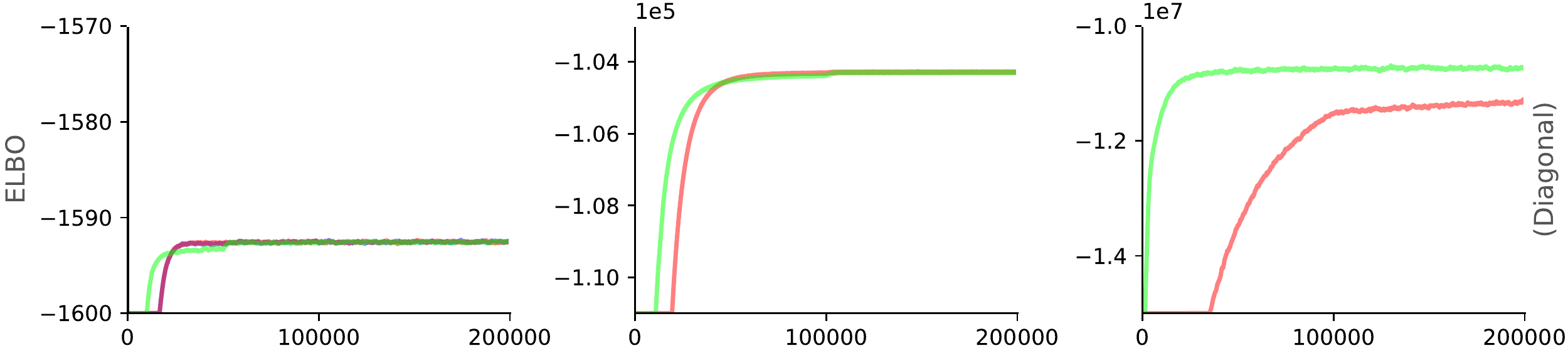}
                \end{subfigure}
    }
       \caption{
        Training ELBO trace for the MovieLens25M problem.
        Top to bottom:  dense, 
        block diagonal, and diagonal Gaussian  
        (for each, we have $\qj$, $\qb$, and $\qab$ method.)
        Left to right: small, moderate, and large scale of the MovieLens25M problem.   
        For clarity, we plot the exponential moving average
        of the training ELBO trace with a smoothing value of 0.001. 
        These traces correspond to the values reported in \cref{tab: movielens results}.}
        \label{fig: real problem results}
\end{figure*}

We use the hierarchical model
\begin{align}
    p(\theta, z, y \vert x) &= \mathcal{N}(\theta \vert 0, I) \prod_{i=1}^{N} \mathcal{N}(z_i \vert \mu(\theta), \Sigma(\theta)) \prod_{j=1}^{n_i}\mathcal{B}(y_{ij} \vert \mathrm{sigmoid} (x_{ij}^\top z_i) ),
\end{align}
where $\theta$ represents distribution over user preferences; 
for instance, $\theta$ might represent that users who like action films tend to also like thrillers but tend to dislike musicals; 
$z_i$ determine the user specific preference; 
$x_{ij}$ are the features of the $j^{\mathrm {th}}$ movie rated by user $i$; 
$y_{ij}$ is the binary movie ratings;
$n_i$ is the number of movies rated by user $i$, and
$\mathcal{B}$ denotes a Bernoulli distribution. Here, $\mu$ and $\Sigma$ are functions of $\theta$, such that, for $\theta=[\theta_\mu, \theta_\Sigma]$, we have
$$
\mu(\theta) = \theta_\mu,  \quad \quad\text{ and } \quad  \quad \Sigma(\theta) = \mathrm{tril}(\theta_\Sigma)^\top \mathrm{tril}(\theta_\Sigma)
$$
where $\mathrm{tril}$ is a function that transforms an unconstrained vector into a lower-triangular positive Cholesky factor. 
As movie features $x_{ij}\in \R^{10}$, we have $\theta_\mu \in \R^{10}, \theta_\Sigma \in \R^{55}$, and $z_i \in \R^{10}$. 
Note that as $\Sigma$ depends on $\theta$, and the likelihood is Bernoulli, the model is non-conjugate. 

For inference, we use the methods as described in \cref{tab: methods summary}. 
Note, we hold the amortization network architecture constant 
across the scales--the number of parameters remains fixed for $\qab$ (for all Gaussian variants)
while the number of parameters scale as $\mathcal{O} (N)$ for $\qb$ (see \cref{app sec: derivation of branch gaussian} for more details.)  

In \cref{fig: real problem results}, we plot the training time ELBO trace, 
and in \cref{tab: movielens results}, we present the final training ELBO and test likelihood values. 
We approximate the test likelihood $p(y^{\mathrm{test}} \vert x^{\mathrm{test}}, x, y)$ with
$\mathbb{E}_q \left[ p(y^{\mathrm{test}} \vert x^{\mathrm{test}}, q) \right]$, and draw a
fresh batch of 10,000 samples to approximate the expectation (see \cref{app sec: experimental details} for complete details.)
For the smaller model, the amortized and branch approaches perform 
similar to the joint approach for all three variational approximations; this supports \cref{thm: branch q better than q for hbd} and \cref{claim: amortization}. 
For the moderate size model, the branch and 
amortize approaches are very comparable to each other, 
while joint approaches fail to scale. 
For the large model (18M ratings), 
amortized approaches are significantly
better than branch methods. 
We conjecture this is 
because parameter sharing in amortized approaches 
improves convergence for models where batch size is 
smaller compared to total iterates--true only for large model. 
Interestingly, the performance of amortized Dense and 
Block Gaussian approximation 
is very similar in the large and moderate setting 
(see $\qab$ results in \cref{tab: movielens results,tab: extended movielens a,tab: extended movielens b}.)
We conjecture this is because the posterior over the global parameters is very concentrated for this problem. 
As $\theta$ behaves like a single fixed value, Block Gaussian performs just as good as the Dense approach (see \cref{app sec: tradeoffs} for more discussion.)

\section{Discussion}
In this paper, we present structured amortized variational 
inference scheme that can scale to large hierarchical models
without losing benefits of joint approximations.
Such models are ubiquitous in social sciences, ecology, 
epidemiology, and other fields. 
Our ideas can not only inspire further research in inference 
but also provide a formidable baseline for applications.


\section*{Acknowledgements and Disclosure of Funding}
This material is based upon work supported in part by the National Science Foundation under Grant No. 1908577.

\bibliography{scalable_vi}
\bibliographystyle{plainnat}

\newpage

\appendix

\section{Other practical challenges}
\label{app sec: practicaL challenges}
\paragraph{Parameterizing covariances} 
It is common to to re-parameterize covariance matrices to a vector of unconstrained parameters. 
As above, the typical way to do this is via a function $\mathrm{tril}$ that maps unconstrained 
vectors to Cholesky factors, i.e. lower-triangular matrices with positive diagonals. 
This can be done by simple re-arranging the components of the vector 
into a lower-triangular matrix, followed by applying a function to map
the entries on the diagonal components to the positive numbers.
In our preliminary experiments, the choice of the mapping was 
quite significant in terms of how difficult optimization was. Common choices
like the ${\exp(x)}$ and ${\log (\exp (x) + 1)}$ functions
did not perform well when the outputs were close to zero.
Instead, we propose to use the transformation 
$ x \mapsto \frac{1}{2}(x + \sqrt{x^2 + 4 \gamma})$, 
where $\gamma$ is a hyperparameter (we use $\gamma=1$). 
This is based on the proximal operator for the multivariate Gaussian entropy 
\cite[section 5]{domke2020provable}. 
Intuitively, when $x$ is a large positive number, 
this mapping returns approximately $x$, while if $x$ is 
a large negative number, the mapping returns approximately $-1/x$. 
This decays to zero more slowly than common mappings, 
which appears to improve numerical stability and 
the conditioning of the optimization.

\paragraph{Feature network architecture.}
In this paper, we propose the use of a separate \texttt{feat\_net} to deal with variable-length input and order invariance.
In our preliminary experiments, we found that the performance improves when we concatenate the embedding $e_j$ in the \cref{fig: featnet} with it's dimension-wise square before sending it to the pooling function \texttt{pool}.
We hypothesis that this is because the embeddings act as learnable statistics, and using the elementwise square directly provides useful information to \texttt{param\_net}. 

\paragraph{Batch size selection}
For the small scale problems (both synthetic and MovieLens), 
we do not subsample data; this is to maintain a fair comparison to 
joint approaches that do not support subsampling.
For moderate and large scales, we select the batch size for the branch methods 
based on the following rule of thumb: we increase the batch size such 
that $\mathsmaller {\frac{\verts{B}}{T^{1.18}}}$ is roughly maximized. 
This rule-of-thumb captures the following intuition. 
Suppose batch size $\verts{B}$ takes time $T$ per iteration.  
If the time taken per iteration for batchsize $\verts{2B}$ is less than $1.8T$, then we should use $2B$.    
Of course, we roughly maximize $\mathsmaller{\frac{\verts{B}}{T^{1.18}}}$ for computational ease. 
We use the same batch size as the branch methods for the amortized methods as we found that amortized methods 
were much more robust to the choice of batchsize. 
We use $\verts {B} = 200$ for moderate scale and $\verts{B}= 400$ for large scale.  


\paragraph{Initialization}
We initialize the neural network parameters using a truncated normal distribution with zero mean 
and standard deviation equal to $\mathsmaller{\sqrt{1/\mathrm{fan\_in}}}$, where $\mathrm{fan\_in}$ is the number of 
inputs to the layer \cite{lecun2012efficient}. We initialize the final output layer of the 
\texttt{param-net} with a zero mean Gaussian with a standard deviation 
of 0.001 \cite{aagrawal2020}. This ensures an almost standard normal initialization for the local conditional $q_{\net_{u}(x_i, y_i)}(z_i \vert \theta)$. 

\paragraph{Gradient Calculation}
In our preliminary experiments, we found sticking the landing gradient \cite[STL]{roeder2017sticking} to be 
less stable. STL requires a re-evaluation of the density which in turn requires a matrix inversion (for the Cholesky factor); this 
matrix inversion was sometimes prone to numerical precision errors. Instead, we found the 
regular gradient, also called the total gradient in \cite{roeder2017sticking}, to be numerically robust as it can be evaluated 
without a matrix inversion. 
This is done by simultaneously sampling and evaluating the density in much the same as done in normlaizing flows \cite{pmlr-v32-rezende14,papamakarios2019normalizing}.
We use the total gradient for all our experiments.  
\section{General trade-offs}
\label{app sec: tradeoffs}

\begin{wraptable}{r}{0.5\textwidth}
    \vspace{-10pt}
    \caption{Summary of method applicability.}
    \label{tab: method applicability}
    \resizebox*{0.5\textwidth}{!}{
        \begin{tabular}[!h]{@{} lccc @{}}
            \toprule
            Models & $\qb$ & $\qab$ & $\qab$ w/ \texttt{feat\_net}$_u$\\
            &{\small (\cref{def: branch dists})}& {\small (\cref{def: amort branch dist})}&{(\small $\net_u$ as in \cref{fig: featnet})}\\
            \midrule
            HBD \hspace{44pt} {(\small \cref{eq: branch distributions})}  & \checkmark & $\times$ & $\times$\\[1.5mm]
            Symmetric HBD {(\small \cref{eq: symmetry condition})}& \checkmark & \checkmark & \checkmark\\[1.5mm]
            Locally i.i.d.  &&&\\
            Symmetric HBD {(\small\cref{eq: branch distributions alternate})} & \checkmark & $\times$ & \checkmark\\
            \bottomrule
        \end{tabular}
        }    
\end{wraptable}

The amortized approach proposed in \cref{sec: amortized dist} is only applicable for symmetric models. 
In \cref{tab: method applicability}, we summarize the applicability of all the methods we discuss in this paper. 
In our preliminary experiments, for amortized approaches, the performance improved when we increased the number of layers in the 
neural networks or increased the length of the embeddings in $\net_u$. However, we make no serious efforts to find the 
optimal architecture. In fact, we use the same architecture for all our experiments, across the scales. We believe the 
performance on a particular task can be further improved by carefully curating the neural architecture. Note that there are 
no architecture choices in joint or branched approaches. We also did not optimize our choice of number of samples to drawn from $q$ to estimate ELBO. 
This forms the second source of stochasticity and using more samples can help reduce the variance \cite{titsias2014doubly}. 
We use 10 copies for all our experiments.   

A particularly interesting case arises when the number of local latent variables ($N$) is very large. In such scenarios, the true posterior $p(\theta \vert x, y)$
can be too concentrated. As the randomness in $\theta$ is very low, we might not gain any significant benefits from conditioning on $\theta$\textemdash as $\theta$ 
reduces to a fixed quantity, Dense Gaussian will work just well as the Block Gaussian (see \cref{tab: movielens results,tab: extended movielens a,tab: extended movielens b}). 
In practice, it is hard to know this apriori; in fact, our scalable approaches allow for such analysis on large scale model.

\section{Proof for Theorem}
\label{app sec: proof of theorem}

\begin{thm*}
    [Repeated] Let $p$ be a HBD, and
     $\qj\pp{\theta,z}$ be a joint approximation family parameterized by $\phi$.
    Choose a corresponding branch variational family $\qb(\theta, z)$ as in \cref{def: branch dists}.  
    Then,
    \[
    \min_{v,w}\KL{\qb}p\leq\min_{\phi}\KL{\qj}p.
    \]
\end{thm*}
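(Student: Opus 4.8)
The plan is to leverage the fact that, by \cref{def: branch dists}, the branch family contains — for every $\phi$ — the particular distribution $\tilde q_\phi(\theta,z) := \qj(\theta)\prod_{i=1}^{N}\qj(z_i\vert\theta)$ obtained from $\qj$ by keeping its $\theta$-marginal and replacing the joint conditional over $z$ by the product of its coordinate marginals $\qj(z_i\vert\theta)$. Since $\min_{v,w}\KL{\qb}{p}\le\KL{\tilde q_\phi}{p}$ holds for each $\phi$, it is enough to prove $\KL{\tilde q_\phi}{p}\le\KL{\qj}{p}$ for all $\phi$ and then take the minimum over $\phi$ on the right-hand side.

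To compare these two divergences I would use the posterior factorization of the HBD, $p(\theta,z\vert y,x)=p(\theta\vert y,x)\prod_i p(z_i\vert\theta,y_i,x_i)$, together with the chain rule for KL. Both $\qj$ and $\tilde q_\phi$ have the same $\theta$-marginal $\qj(\theta)$, so the ``global'' contribution $\KL{\qj(\theta)}{p(\theta\vert y,x)}$ appears identically in both decompositions and cancels. What is left is to compare, in expectation over $\theta\sim\qj(\theta)$, the conditional divergence $\KL{\qj(z\vert\theta)}{\prod_i p(z_i\vert\theta,y_i,x_i)}$ against $\KL{\prod_i\qj(z_i\vert\theta)}{\prod_i p(z_i\vert\theta,y_i,x_i)}$, and it suffices to establish this inequality pointwise in $\theta$.

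That pointwise statement is a general fact I would isolate as a lemma: if the target factorizes across coordinates as $\pi(z)=\prod_i\pi_i(z_i)$, then for any $q$ with coordinate marginals $q_i$ one has $\KL{\prod_i q_i}{\pi}\le\KL{q}{\pi}$. The proof is a short entropy computation — the cross-entropy term $\sum_i\E_q[\log\pi_i(z_i)]=\sum_i\E_{q_i}[\log\pi_i(z_i)]$ depends only on the marginals, so it is the same for $q$ and for $\prod_i q_i$, and the difference of the two divergences collapses to $\sum_i H(q_i)-H(q)=\KL{q}{\prod_i q_i}\ge 0$ (subadditivity of entropy / nonnegativity of total correlation). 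Applying this with $q=\qj(\cdot\vert\theta)$ and $q_i=\qj(z_i\vert\theta)$ gives the conditional inequality, hence $\KL{\tilde q_\phi}{p}\le\KL{\qj}{p}$, and chaining with the earlier bound and minimizing over $\phi$ yields the theorem.

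The main obstacle is essentially careful bookkeeping: organizing the chain-rule decomposition so that the global term cancels cleanly, and checking that the marginal conditionals $\qj(z_i\vert\theta)$ are exactly the marginals entering the lemma. A minor regularity caveat is that the entropy manipulation presumes the relevant densities and entropies are well defined; this can be sidestepped by working directly with the difference of the two KL divergences, which is finite in the only case of interest ($\KL{\qj}{p}<\infty$).
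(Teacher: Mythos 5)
Your proposal is correct, and it follows the same overall strategy as the paper's proof: both construct the intermediate distribution $\qj(\theta)\prod_{i}\qj(z_i\vert\theta)$ (which the branch family contains for every $\phi$ by \cref{def: branch dists}), reduce the theorem to showing this surrogate has no larger divergence than $\qj$ itself, and peel off the shared global term $\KL{\qj(\rtheta)}{p(\rtheta\vert x,y)}$ via the chain rule before minimizing over $\phi$. Where you genuinely diverge is in how the key inequality for the local variables is established. The paper continues the chain rule coordinate-by-coordinate over $z_1,\dots,z_N$ and argues that each conditional term $\KL{\qj(\rz_i\vert\rz_{<i},\rtheta)}{p(\rz_i\vert\rtheta,x_i,y_i)}$ can only decrease when $\rz_{<i}$ is averaged out, invoking convexity of the KL divergence and Jensen's inequality. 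You instead isolate a clean lemma for factorized targets and prove it by an entropy computation: since the posterior over $z$ given $\theta$ factorizes as $\prod_i p(z_i\vert\theta,x_i,y_i)$, the cross-entropy term depends only on the marginals of $\qj(z\vert\theta)$, and the gap between the two divergences collapses to the total correlation $\KL{\qj(\rz\vert\theta)}{\prod_i\qj(\rz_i\vert\theta)}\ge 0$. Your route buys an exact identity for the slack --- it quantifies precisely how much is lost by the joint family's coupling of the $z_i$, namely the expected total correlation under $\qj(\rtheta)$ --- whereas the paper's Jensen argument yields only the inequality but works entirely with KL divergences and so avoids the differential-entropy integrability caveat you correctly flag (and correctly resolve by working with the difference of divergences). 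Both arguments rest on the same two structural facts, the conditional independence of the $z_i$ in the HBD posterior and the containment of the product-of-marginals surrogate in the branch family, and both are complete.
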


\begin{proof}
    Construct a new distribution $q'_\phi$ such that 
    \begin{align}
        q'_\phi(\theta, z) = \qj (\theta) \prod_{i} \qj(z_i \vert \theta). 
    \end{align}
    Then, note that $z_i$ are conditionally independent in $q'_\phi$, such that,  
    \begin{align}
        q'_\phi(z_i \vert \theta, z_{<i}) & = q'_\phi(z_i \vert \theta).
    \end{align}
    From chain rule of KL-divergence, we have
    \begin{align*}
        \KL{\qj(\rtheta, \rz)}{p(\rtheta, \rz \vert x, y)} &= \KL{\qj(\rtheta)}{p(\rtheta \vert x, y)} \\
        & \phantom{some}+ \sum_i \KL {\qj(\rz_i \vert \rz_{<i}, \rtheta)}{p(\rz_i \vert \rz_{<i}, \rtheta, x, y)},\text{ and}\\
        \KL{q'_\phi(\rtheta, \rz)}{p(\rtheta, \rz \vert x, y)} &= \KL{\qj(\rtheta)}{p(\rtheta \vert x, y)} \\
        & \phantom{some}+ \sum_i \KL {q'_\phi(\rz_i \vert \rz_{<i}, \rtheta)}{p(\rz_i \vert \rz_{<i}, \rtheta, x, y)}.\\
    \end{align*} 
    Consider any arbitrary summand term. We have that 
    \begin{align*}
        &\KL {\qj(\rz_i \vert \rz_{<i}, \rtheta)}{p(\rz_i \vert \rz_{<i}, \rtheta, x, y)}\\ 
        & \overset{(1)}{=} \KL {\qj(\rz_i \vert \rz_{<i}, \rtheta)}{p(\rz_i \vert \rtheta, x_i, y_i)}\\
        & \overset{(2)}{=} \E_{\theta \sim \qj(\rtheta)}\E_{z_{< i} \sim \qj (\rz_{<i} \vert \rtheta)} [\KL{\qj(\rz_i \vert z_{<i}, \theta)}{p(\rz_i \vert \theta, x_i, y_i)}]\\
        & \overset{(3)}{\ge} \E_{\theta \sim \qj(\rtheta)} \left[\KL{\E_{z_{< i} \sim \qj (\rz_{<i} \vert \rtheta)}\left[\qj(\rz_i \vert z_{<i}, \theta)\right]}{\E_{z_{< i} \sim \qj (\rz_{<i} \vert \rtheta)}\left[p(\rz_i \vert \theta, x_i, y_i)\right]}\right]\\
        & \overset{(4)}{=} \E_{\theta \sim \qj(\rtheta)} \left[\KL{ \qj(\rz_i \vert \theta)}{p(\rz_i \vert \theta, x_i, y_i)}\right]\\
        & \overset{}{=} \KL{ \qj(\rz_i \vert \rtheta)}{p(\rz_i \vert \rtheta, x_i, y_i)}\\
        & \overset{}{=} \KL{ q'_\phi(\rz_i \vert \rtheta)}{p(\rz_i \vert \rtheta, x_i, y_i)}\\
        & \overset{(5)}{=} \KL{ q'_\phi(\rz_i \vert \rtheta, \rz_{<i})}{p(\rz_i \vert \rtheta, \rz_{<i}, x_i, y_i)},\\
    \end{align*}
    where (1) follows from HBD structure; (2) follows from definition of conditional KL divergence; (3) follows from convexity of KL divergence and Jensen's inequality; (4) follows from marginalization, and (5) follows from the conditional independence of $q'$ and $p$.
    Summing the above result over $i$ gives that 
    \begin{align}
        \KL{q'_\phi}{p} \le \KL{\qj}{p}.
    \end{align}
    Now, from \cref{def: branch dists}, we know that for every $\phi$, there exists a corresponding $(v, w)$ such that $\qb = q'_\phi.$
    Let $\phi^* = \argmin_\phi \KL {\qj}{p}$. Then, there exists some $\qb = q'_{\phi^*}$. 
    Then, it follows that 
    \begin{align}
        \min_{v,w}\KL{\qb}p\le \KL{q'_{\phi^*}}{p} \le \KL{q^{\mathrm{Joint}}_{\phi^*}}{p} = \min_{\phi}\KL{\qj}p.
    \end{align}
\end{proof}

\section{Proof for Claim}
\label{app sec: proof of claim}
\begin{claim*}[Repeated]
    Let $p$ be a symmetric HBD and let $\qj$ be some joint 
    approximation. Let $\qab$ be as in \cref{def: branch dists}.
    Suppose that for all $v$, there exists a $u$, such that, 
    \begin{align}
        {\normalfont \net}_u (x_i, y_i) = \argmax_{w_i}\E_{q_{v}\left(\rtheta\right)}\E_{q_{w_{i}}\left(\rz_{i}\vert\rtheta\right)}\left[\log\frac{p\left(\rz_{i},y_{i}\vert\rtheta,x_{i}\right)}{q_{w_{i}}\left(\rz_{i}\vert\rtheta\right)}\right ]. 
    \end{align}
    Then, 
    \begin{align}
        \min_{v, u} \KL{\qab}p \le \min_{\phi} \KL{\qj}p
    \end{align}
\end{claim*}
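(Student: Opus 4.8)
The plan is to reduce \cref{claim: amortization} to \cref{thm: branch q better than q for hbd}: under the stated hypothesis I will show $\min_{v,u}\KL{\qab}p\le\min_{v,w}\KL{\qb}p$, and then the branch theorem gives $\min_{v,w}\KL{\qb}p\le\min_\phi\KL{\qj}p$. Throughout I would use the elementary identity $\KL{q}{p}=\log p(y\vert x)-\ELBO{q}{p}$, so that minimizing a KL divergence over a family is equivalent to maximizing the ELBO over that family; this lets me argue entirely with ELBOs.

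First I would invoke the decomposition of the branch ELBO in \cref{eq: branch elbo decomposition}. For a fixed $v$, that objective equals $\E_{q_v}\bracs{\log\frac{p(\rtheta)}{q_v(\rtheta)}}$ plus a sum of $N$ terms in which the $i$-th term depends on $w$ only through $w_i$. Hence $\max_w$ distributes over the sum, and for each $i$ the maximizing $w_i$ is exactly the optimal local parameter $w_i^*$ of \cref{eq: optimal local wi}, which depends on $v$ (but not on the other branches) and which is precisely the right-hand side of the hypothesis of \cref{claim: amortization}. I would then let $(v^*,w^*)$ attain $\min_{v,w}\KL{\qb}p$; by the decomposition just described I may assume that each $w_i^*$ is the optimizer of \cref{eq: optimal local wi} evaluated at $v=v^*$.

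Next I would apply the hypothesis at $v=v^*$: there exists $u^*$ with $\net_{u^*}(x_i,y_i)=w_i^*$ for every $i$. Substituting this into \cref{def: amort branch dist} shows that $\qab$ with parameters $(v^*,u^*)$ is, as a distribution, identical to $\qb$ with parameters $(v^*,w^*)$. Consequently
\[
\min_{v,u}\KL{\qab}p\;\le\;\KL{q^{\mathrm{Amort}}_{v^*,u^*}}{p}\;=\;\KL{q^{\mathrm{Branch}}_{v^*,w^*}}{p}\;=\;\min_{v,w}\KL{\qb}p\;\le\;\min_\phi\KL{\qj}p,
\]
the last inequality being \cref{thm: branch q better than q for hbd}. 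Symmetry of $p$ is what makes the hypothesis credible — one network $\net_u$ can solve all $N$ identical per-branch subproblems at once — but it plays no further role once the hypothesis is assumed.

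The main obstacle is making the decomposition step rigorous: one must check that $\max_w$ of the branch ELBO genuinely equals $\max_v$ of the $\rtheta$-term plus $\sum_i\max_{w_i}$ of the $i$-th term (i.e.\ that the per-branch subproblems are decoupled and their individual maximizers reassemble into a global maximizer), and that these suprema are attained so that the $\argmax$ in the hypothesis is well defined — something the statement of the claim already presupposes. The remainder is bookkeeping together with a direct appeal to the branch theorem.
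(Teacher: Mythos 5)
Your proposal is correct and follows essentially the same route as the paper's own proof: decompose the branch ELBO so that the maximization over $w$ splits into independent per-branch problems whose solutions are exactly \cref{eq: optimal local wi}, invoke the hypothesis to realize those optima via $\net_u$ (so the amortized family attains the branch optimum), translate from ELBO to KL, and finish with \cref{thm: branch q better than q for hbd}. The only cosmetic difference is that you instantiate the hypothesis at the optimal $v^*$ to exhibit an explicit $(v^*,u^*)$, whereas the paper keeps the argument as a chain of $\max$ inequalities; the content is the same.
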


\begin{proof}
    Consider the optimization for $\qb$. We have
    \begin{align*}
        \max_{v,w}\ELBO{\qb}p 
        &=\max_{v,w} \bracs{\E_{q_{v}\left(\rtheta\right)}\left[\log\frac{p\left(\rtheta\right)}{q_{v}\left(\rtheta\right)}\right]+\sum_{i=1}^{N}\E_{q_{v}\left(\rtheta\right)}\E_{q_{w_{i}}\left(\rz_{i}\vert\rtheta\right)}\left[\log\frac{p\left(\rz_{i},y_{i}\vert\rtheta,x_{i}\right)}{q_{w_{i}}\left(\rz_{i}\vert\rtheta\right)}\right]}\\
        & =\max_v \bracs{\E_{q_{v}\left(\rtheta\right)}\left[\log\frac{p\left(\rtheta\right)}{q_{v}\left(\rtheta\right)}\right]+\sum_{i=1}^{N}\max_{w_i}\E_{q_{v}\left(\rtheta\right)}\E_{q_{w_{i}}\left(\rz_{i}\vert\rtheta\right)}\left[\log\frac{p\left(\rz_{i},y_{i}\vert\rtheta,x_{i}\right)}{q_{w_{i}}\left(\rz_{i}\vert\rtheta\right)}\right]}\\
        & \overset{(1)}{=}\max_v \bracs{\E_{q_{v}\left(\rtheta\right)}\left[\log\frac{p\left(\rtheta\right)}{q_{v}\left(\rtheta\right)}\right]+\sum_{i=1}^{N}\E_{q_{v}\left(\rtheta\right)}\E_{q_{\net_u(x_i, y_i)}\left(\rz_{i}\vert\rtheta\right)}\left[\log\frac{p\left(\rz_{i},y_{i}\vert\rtheta,x_{i}\right)}{q_{\net_u(x_i, y_i)}\left(\rz_{i}\vert\rtheta\right)}\right]}\\
        & {\le} \max_u \max_v \bracs{\E_{q_{v}\left(\rtheta\right)}\left[\log\frac{p\left(\rtheta\right)}{q_{v}\left(\rtheta\right)}\right]+\sum_{i=1}^{N}\E_{q_{v}\left(\rtheta\right)}\E_{q_{\net_u(x_i, y_i)}\left(\rz_{i}\vert\rtheta\right)}\left[\log\frac{p\left(\rz_{i},y_{i}\vert\rtheta,x_{i}\right)}{q_{\net_u(x_i, y_i)}\left(\rz_{i}\vert\rtheta\right)}\right]}\\
        & = \max_u \max_v \ELBO{\qab}p,
        \end{align*}
        where (1) follows from the assumption in the Claim.
        Now, from the ELBO decomposition equation, we have
        \begin{align}
            \log p(y\vert x) & = \ELBO{q}{p}  + \KL{q}{p}.
        \end{align}
        Therefore, we have 
        \begin{align}
            \min_u \min_v  \KL{\qab}p \le \min_{v,w}  \KL{\qb}p 
        \end{align}
        From \cref*{thm: branch q better than q for hbd}, we get the desired result. 
        \begin{align}
            \min_u \min_v  \KL{\qab}p \le \min_{v,w}  \KL{\qb}p\le \min_{\phi}  \KL{\qj}p 
        \end{align}
\end{proof}

\section{Derivation for Branch Gaussian}
\label{app sec: derivation of branch gaussian}

Let $\qj(\theta, z)  = \N ((\theta, z) \vert \mu, \Sigma)$ be the joint Gaussian approximation as in \cref{cor: branch gaussian}. Further, let $(\mu, \Sigma)$ be defined as 
\begin{align}
    \mu = \begin{bmatrix}
        \mu_{\theta}\\
        \mu_{z_1}\\
        \vdots\\
        \mu_{z_N}
    \end{bmatrix}\quad \text{and } \quad  
    \Sigma = \begin{bmatrix}
        \Sigma_\theta & \Sigma_{\theta z_1} & \dots & \Sigma_{\theta z_N}\\
        \Sigma_{\theta z_1}^\top & \Sigma_{z_1 } & \dots & \Sigma_{z_1 z_N}\\
        \vdots & \vdots & \ddots & \vdots\\
        \Sigma_{\theta z_n}^\top & \Sigma_{z_1 z_N}^\top & \dots & \Sigma_{z_N }\\
    \end{bmatrix}.
\end{align}
Then,  from the properties of the multivariate Gaussian \cite{IMM2012-03274}
\begin{align}
    \qj(z_i \vert \theta) &= \N (z_i \vert \mu_{z_i \vert \theta}, \Sigma_{z_i \vert \theta}) \text{, where}\\
    \mu_{z_i \vert \theta} & = \mu_{z_i} + \Sigma_{\theta z_i}^\top \Sigma_{\theta}^{-1}(\theta - \mu_\theta) \text{, and }\\
    \Sigma_{z_i \vert \theta} & = \Sigma_{z_i z_i} - \Sigma_{\theta z_i}^\top \Sigma_{\theta}^{-1} \Sigma_{\theta z_i}.
\end{align}
Now, to parameterize a corresponding $\qb$, we use the $(\mu_i, \Sigma_i, A_i)$, such that, 

\begin{align}
    \qb(z_i \vert \theta) &= \N (z_i \vert \mu_i + A_i \theta, \Sigma_i).
\end{align}


\section{Experimental Details}
\label{app sec: experimental details}
\paragraph{Architectural Details}

\begin{wraptable}{r}{0.35\textwidth}
    \vspace{-5pt}
    \caption{
        Architecture details for $\net_u$. 
        Each fully-connected layer is followed by leaky-ReLU baring the last layer.  
        }
        \centering
        \label{tab: arch details}
    \resizebox*{0.3\textwidth}{!}{
        \begin{tabular}[!h]{@{} ll @{}}
            \toprule
            Network & Layer Skeleton\\
            \midrule
            \texttt{feat-net} & 64, 64, 64, 128\\
            \texttt{param-net} & 256, 256, 256\\
            \bottomrule
        \end{tabular}
        }
    \vspace{-10pt}
    \end{wraptable}
We use the architecture as reported in \cref{tab: arch details} for all our amortized approaches. In addition to 
using $e_j$ as detailed in  \cref{fig: featnet}, we concatenate the elementwise square before sending it to \texttt{param-net}. 
Thus, the input to \texttt{param-net} is not 128 dimensional but 256 dimensional. Further, we use mean as the \texttt{pool} function. 

\paragraph{Compute Resources}
We use JAX \cite{jax2018github} to implement our methods. We trained using Nvidia 2080ti-12GB. All methods finished training within 4 hours. Branch approaches were 
at an average twice as fast as amortized variants.  

\paragraph{Step-size drop}
We use Adam \cite{kingma2014adam} for training with an initial step-size of 0.001 (and default values for other hyperparameters.) 
In preliminary experiments, we found that dropping the step-size improves the performance. 
Starting from 0.001, we drop the step to one-tenth of it's value after a predetermined number of steps. 
For small scale experiments, we drop a total of three times after every 50,000 iterations (we train for 200,000 iterations.)
For moderate and large scale models, we drop once after 100,000 iterations. 

\subsection{Movielens}
\label{app sec: movielens details}
\begin{wraptable}{r}{0.4\textwidth}
    \vspace{-10pt}
    \caption{\small Metrics used for evaluation. We use K = 10,000 samples from the posterior. Here, $(z^{k}, \theta^{k}) \sim q(\rz, \rtheta \vert x^{\mathrm{train}}, y^{\mathrm{train}})$.}
        \centering
        \label{tab: metrics}
    \resizebox*{0.4\textwidth}{!}{
        \begin{tabular}[!h]{@{} ll @{}}
            \toprule
            Metric & Expression\\
            \midrule
            Test likelihood & $\log \frac{1}{K}\sum_{k}  p(y^{\mathrm{test}}\vert x^{\mathrm{test}}, z^k, \theta^k)$ \\[1.5mm]
            Train likelihood & $\log \frac{1}{K}\sum_{k}  \frac{p(y^{\mathrm{train}}, z^k, \theta^k\vert x^{\mathrm{train}})}{q(z^k, \theta^k \vert x^{\mathrm{train}}, y^{\mathrm{train}})}$ \\[1.5mm]
            Train ELBO & $\frac{1}{K}\sum_{k} \log  \frac{p(y^{\mathrm{train}}, z^k, \theta^k\vert x^{\mathrm{train}})}{q(z^k, \theta^k \vert x^{\mathrm{train}}, y^{\mathrm{train}})}$ \\
            \bottomrule
        \end{tabular}
        }
    \vspace{-15pt}
\end{wraptable}

\paragraph{Feature Dimensions}
We reduce the movie feature dimensionality to 10 using PCA. This is done with branch approaches in focus as the number of 
features for dense branch Gaussian scale as $\mathcal{O}(ND^3)$, where $D$ is the dimensionality of the movie features. Note, 
that the number of features for amortized approaches is independent of $N$ allowing for better scalability. 

\paragraph{Metrics}
We use three metrics for performance evaluation---test likelihood, train likelihood, and train ELBO. Details of the expressions are 
presented in \cref{tab: metrics}. We draw a batch of fresh 10,000 samples from the posterior to estimate each metric. Of course, the
evaluated expressions are just approximation to the true value. In \cref{tab: extended movielens a} and \cref{tab: extended movielens b} 
we present the extended results. In \cref{tab: extended movielens b} we present the same values but normalized by the number of ratings in the dataset.
\paragraph{Preprocess}

Movielens25M originally uses a 5 point ratings system. To get binary ratings, we map ratings greater than 3 points to 1 and less than and equal to 3 to 0.
\begin{table}
    \caption{This table has the extended results for the Movielens25M Dataset. All values are in nats. Higher is better. }
    \label{tab: extended movielens a}
    \resizebox{\textwidth}{!}{
        \begin{tabular}{@{} llrllrllrll @{}}
        \toprule
                 & {} & \multicolumn{3}{l}{Test LL} & \multicolumn{3}{l}{Train LL} & \multicolumn{3}{l}{Train ELBO} \\
        $\approx$ \# ratings &  &    \multicolumn{1}{c}{2.5K} &      \multicolumn{1}{c}{180K} &         \multicolumn{1}{c}{18M} &     \multicolumn{1}{c}{2.5K} &      \multicolumn{1}{c}{180K} &         \multicolumn{1}{c}{18M} &       \multicolumn{1}{c}{2.5K} &        \multicolumn{1}{c}{180K} &         \multicolumn{1}{c}{18M} \\
        Methods & {} &         &           &             &          &           &             &            &             &             \\
        \midrule
        Dense & $\qj$ & -166.37 &           &             & -1373.97 &           &             &   -1572.31 &             &             \\
                 & $\qb$ & -166.66 & -11054.43 & -1.3046e+06 & -1374.20 & -95731.42 & -1.0315e+07 &   -1572.39 & -1.0368e+05 & -1.1413e+07 \\
                 & $\qab$ & -166.64 & -10976.38 & -1.1476e+06 & -1374.27 & -95980.37 & -1.0027e+07 &   -1572.45 & -1.0352e+05 & -1.0665e+07 \\[2.5mm]
        Block & $\qj$ & -167.36 &           &             & -1375.56 &           &             &   -1579.04 &             &             \\
        Diagonal& $\qb$ & -166.97 & -10987.17 & -1.2538e+06 & -1375.71 & -95891.42 & -1.0399e+07 &   -1579.05 & -1.0350e+05 & -1.1078e+07 \\
                 & $\qab$ & -166.96 & -10975.96 & -1.1484e+06 & -1375.71 & -95962.56 & -1.0027e+07 &   -1579.06 & -1.0353e+05 & -1.0665e+07 \\[2.5mm]
        Diagonal & $\qj$ & -167.39 &           &             & -1377.25 &           &             &   -1592.59 &             &             \\
                 & $\qb$ & -167.31 & -10977.95 & -1.2713e+06 & -1377.19 & -96414.40 & -1.0709e+07 &   -1592.64 & -1.0428e+05 & -1.1325e+07 \\
                 & $\qab$ & -167.29 & -10980.75 & -1.1497e+06 & -1377.20 & -96467.88 & -1.0068e+07 &   -1592.64 & -1.0430e+05 & -1.0736e+07 \\
        \bottomrule
        \end{tabular}
        }
\end{table}
\begin{table}
    \caption{This table has the extended results for the Movilens25M Dataset. It has the same results as in \cref{tab: extended movielens a}; however, 
    the values are divided by the number of ratings. }
    \label{tab: extended movielens b}
    \resizebox{\textwidth}{!}{\begin{tabular}{@{} llrllrllrll @{}}
        \toprule
                 & {} & \multicolumn{3}{l}{Test LL} & \multicolumn{3}{l}{Train LL} & \multicolumn{3}{l}{Train ELBO} \\
                 & $\approx$ \# ratings &    \multicolumn{1}{c}{2.5K} &    \multicolumn{1}{c}{180K} &     \multicolumn{1}{c}{18M} &     \multicolumn{1}{c}{2.5K} &    \multicolumn{1}{c}{180K} &     \multicolumn{1}{c}{18M} &       \multicolumn{1}{c}{2.5K} &    \multicolumn{1}{c}{180K} &     \multicolumn{1}{c}{18M }\\
        Methods & {} &         &         &         &          &         &         &            &         &         \\
        \midrule
        Dense & $\qj$ & -0.5717 &         &         &  -0.5108 &         &         &    -0.5845 &         &         \\
                 & $\qb$ & -0.5727 & -0.5640 & -0.6486 &  -0.5109 & -0.5224 & -0.5492 &    -0.5845 & -0.5658 & -0.6077 \\
                 & $\qab$ & -0.5726 & -0.5600 & -0.5705 &  -0.5109 & -0.5238 & -0.5339 &    -0.5846 & -0.5649 & -0.5678 \\ [2.5mm]
        Block & $\qj$ & -0.5751 &         &         &  -0.5114 &         &         &    -0.5870 &         &         \\
        Diagonal & $\qb$ & -0.5738 & -0.5606 & -0.6233 &  -0.5114 & -0.5233 & -0.5537 &    -0.5870 & -0.5648 & -0.5898 \\
                 & $\qab$ & -0.5738 & -0.5600 & -0.5709 &  -0.5114 & -0.5237 & -0.5339 &    -0.5870 & -0.5650 & -0.5678 \\[2.5mm]
        Diagonal & $\qj$ & -0.5752 &         &         &  -0.5120 &         &         &    -0.5920 &         &         \\
                 & $\qb$ & -0.5749 & -0.5601 & -0.6320 &  -0.5120 & -0.5261 & -0.5702 &    -0.5921 & -0.5691 & -0.6029 \\
                 & $\qab$ & -0.5749 & -0.5602 & -0.5716 &  -0.5120 & -0.5264 & -0.5360 &    -0.5921 & -0.5691 & -0.5716 \\
        \bottomrule
        \end{tabular}
        }
\end{table}
\subsection{Synthetic problem}
\label{app sec: synthetic details}
\paragraph{Details of the model}

We use the hierarchical regression model  
\begin{align*}
    p(\theta, z, y \vert x) &= \mathcal{N}(\theta \vert 0, I) \prod_{i=1}^{N} \mathcal{N}(z_i \vert \theta, I) \prod_{j=1}^{n_i}\mathcal{N}(y_{ij} \vert x_{ij}^\top z_i, 1)
\end{align*}
for synthetic experiments. For simplicity, we use $n_i = 100$ for all $i$; we vary $N$ to create different scale variants---we use $N = 10$ for small scale, $N = 1000$ for moderate scale, 
and $N = 100000$ for large scale experiments; we set $x_{ij} \in \R^{10}$ and thus $\theta \in \R^{10}$ and $ z_i \in \R^{10}$; $y_{ij} \in \R$.

\paragraph{Expression for posterior}
\begin{align*}
    p(\theta \vert x, y) &= \mathsmaller{\displaystyle  \N \Bigg(\left[I_D + \sum_i x_i^\top (I_{M_i} + x_i x_i^\top)^{-1} x_i\right]^{-1} \left[\sum_i x_i^\top (I_M + x_i x_i^\top)^{(-1)}y_i\right],} \\
                         & \mathsmaller{\displaystyle  \phantom{Something something something ething}  \left[ I_D + \sum_i x_i^\top (I_M + x_i x_i^\top)^{-1} x_i\right]^{-1}\Bigg)}\\
    p(z_i \vert \theta, x_i, y_i) &= \mathsmaller{\N ([I_D +  x_i^\top x_i]^{-1} [x_i^\top y_i + \theta], [I_D +  x_i^\top x_i]^{-1})}\\
\end{align*}
\paragraph{Expression for marginal likelihood}
\begin{align*}
    p(y|x) 
         & =\mathcal{N}\left(0,\begin{bmatrix}I_{M}+2x_{1}x_{1}^{\top} & \dots & x_{n}x_{1}^{\top}\\
        \vdots & \ddots & \vdots\\
        x_{n}x_{1}^{\top} & \dots & I_{M}+2x_{n}x_{n}^{\top}
        \end{bmatrix}\right)\\
\end{align*}



\begin{figure*}[!h]
    \centering
    \resizebox*{\textwidth}{!}{
        \begin{subfigure}[b]{\textwidth}
            \centering
            \includegraphics[trim=0 0 0 0,clip,width=\textwidth]{./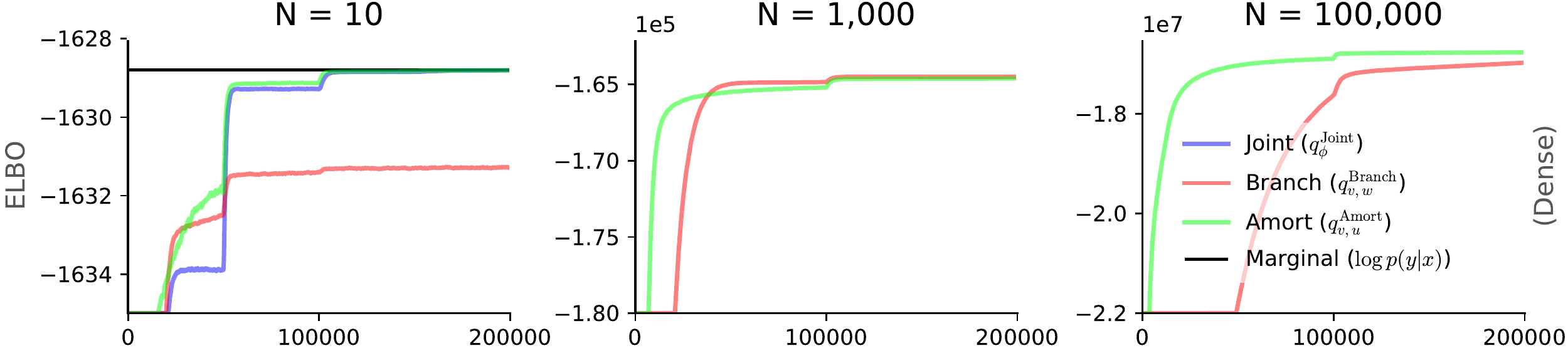}
        \end{subfigure}
    }
    \resizebox*{\textwidth}{!}{
            \begin{subfigure}[b]{\textwidth}
                \centering
                \includegraphics[trim=0 0 0 0,clip,width=\textwidth]{./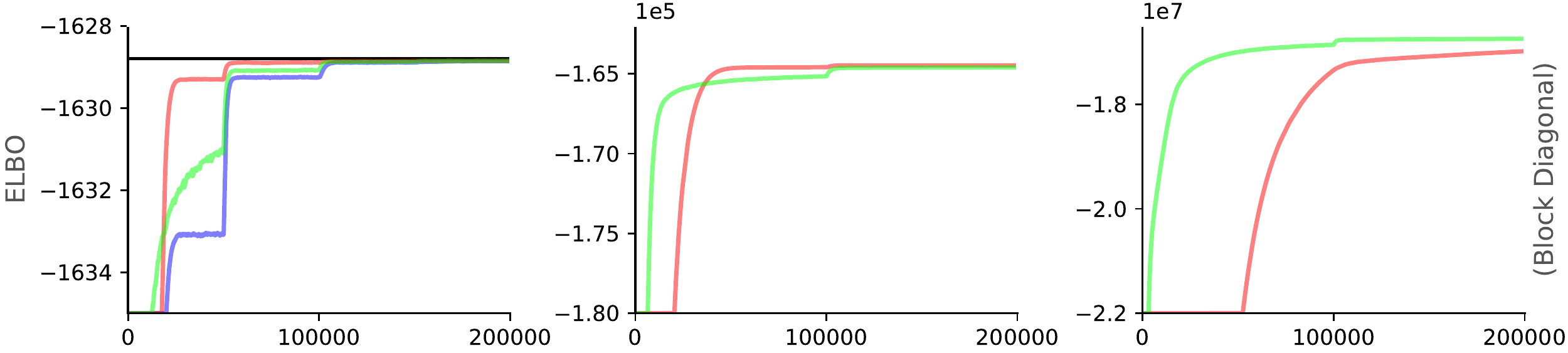}
            \end{subfigure}
    }
    \resizebox*{\textwidth}{!}{
                \begin{subfigure}[b]{\textwidth}
                    \centering
                    \includegraphics[trim=0 0 0 0,clip,width=\textwidth]{./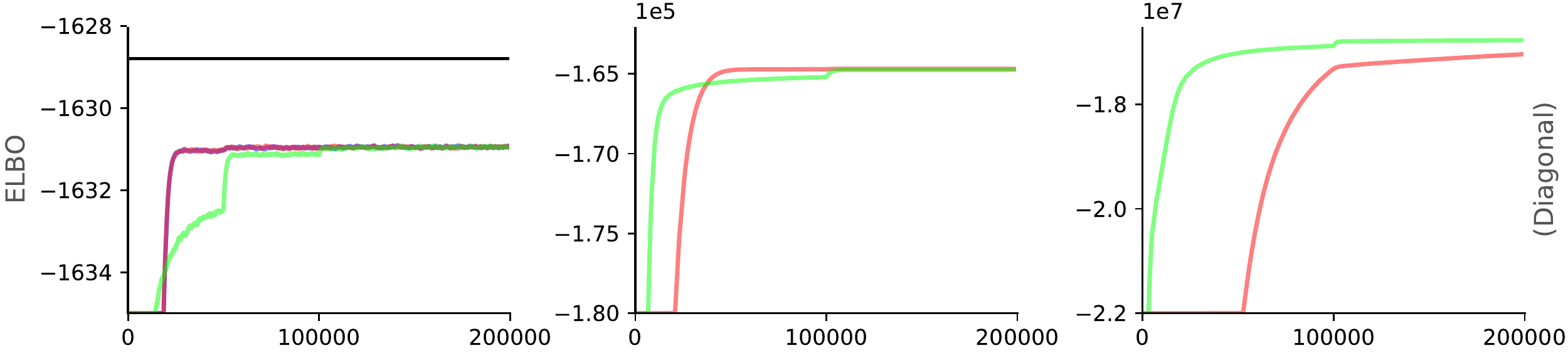}
                \end{subfigure}
    }
       \caption{
        Training ELBO trace for the synthetic problem.
        Top to bottom:  dense, 
        block diagonal, and diagonal Gaussian  
        (for each, we have $\qj$, $\qb$, and $\qab$ method.)
        Left to right: small, moderate, and large scale of the synthetic problem.   
        For clarity, we plot the exponential moving average
        of the training ELBO trace with a smoothing value of 0.001. 
        For the small setting, we also plot the true log-marginal $\log p(y \vert x)$ for reference (black horizontal line): 
        ELBO for dense approach is exactly same as the log-marginal, it's slightly lower for block, and is much less for the diagonal (see first column.) 
        Note, calculating the log-marginal was computationally prohibitive for the moderate and large setting.}
        \label{fig: synthetic results}
\end{figure*}

\end{document}